\newcommand{\R}{\mathbb{R}}
\newcommand{\deltaback}{\operatornamewithlimits{\Delta}}
\newcommand{\deltaforward}{\operatornamewithlimits{\Delta}}
\begin{document}

\title{A Forward Backward Greedy approach for Sparse Multiscale Learning
}


\author{Prashant Shekhar         \and
        Abani Patra 
}


\institute{Prashant Shekhar \at
              Data Intensive Studies Center, Tufts University \\
              \email{prashant.shekhar@tufts.edu}           
           \and
           Abani Patra \at
              Department of Mathematics and Computer Science, Tufts University\\ \email{abani.patra@tufts.edu} 
}

\date{Received: date / Accepted: date}

\maketitle

\begin{abstract}
Multiscale Models are known to be successful in uncovering and analyzing the structures in data at different resolutions. In the current work we propose a feature driven Reproducing Kernel Hilbert space (RKHS), for which the associated kernel has a weighted multiscale structure. For generating approximations in this space, we provide a practical forward-backward algorithm that is shown to greedily construct a set of basis functions having a multiscale structure, while also creating sparse representations from the given data set, making representations and predictions very efficient. We provide a detailed analysis of the algorithm including recommendations for selecting  algorithmic hyper-parameters and estimating probabilistic rates of convergence at individual scales. Then we extend this analysis to multiscale setting, studying the effects of finite scale truncation and quality of solution in the inherent RKHS. In the last section, we analyze the performance of the approach on a variety of simulation and real data sets, thereby justifying the efficiency claims in terms of model quality and data reduction.
\keywords{Multiscale models \and Greedy algorithms \and Sparse representation}
\end{abstract}

\section{Introduction}
\label{intro}
Hierarchical and multiscale models have been used in variety of scientific domains as a means to model either a physical phenomena (based on modeling the governing equations through discretization procedures like finite element \cite{efendiev2009multiscale,efendiev2013generalized}) or modeling discrete data from a physical system by assuming multilevel interactions among the observations resulting in hierarchical features \cite{ferrari2004multiscale,maggioni2016multiscale,allard2012multi}. In this regard, the recent advances in Deep learning \cite{goodfellow2016deep,lecun2015deep} architectures have provided us novel directions for hierarchical modeling of datasets through sequential function compositions. Each layer in these networks represents a particular level in the overall hierarchy of the approximation produced at the output layer. Although widely successful, the capability of these models to produce approximations at unseen data samples (referred to as generalization) has been a major area of concern since these highly flexible approximators are easily prone to overfitting (resulting mainly from excessive number of stacked hidden layers). In this paper, we instead propose
multiscale approximations to a given dataset that exploit and uncover the inherent hierarchy of interactions among different data points. Since the depth of our approximation (number of scales) is dependent on data itself (we can stop the algorithm when it reaches a given budget for the size of sparse representation or mean squared error loss for noise free datasets), hence it need not be separately analyzed as is the case for number of hidden layers in a deep neural network. Moreover, the scales we include in our approximation have a geometric reasoning based on the varying support structure of the associated basis functions. Such type of intuition is generally missing while adding additional layers in a deep neural network.

The problem setup we consider can be briefly described as follows. Given a dataset $D = \{(x_i,y_i)\}_{i=1}^n$ composed of observations $Y = (y_1,y_2,....,y_n) \in {\R}^n$ at locations $X = (x_1,x_2,...,x_n) \in {\R}^{n \times d}$, we consider (and approximate) some true underlying latent process $f: {\R}^d \to {\R}$ generating these observations. We also assume the observations $(x_i,y_i) \in X \times Y$ as i.i.d samples from $\Omega_x \times \Omega_y$ ($X \subset \Omega_x \subset {\R}^d$ and $Y \subset \Omega_y \subset {\R}$). Given such a dataset, generally in machine learning, we aim to find an approximation $\hat{f} \in \mathcal{H}$ (native Reproducing Kernel Hilbert Space - RKHS) to $f$. This is obtained by solving
\begin{equation}
    \hat{f} = arg \min_{\tilde{f} \in \mathcal{H}} L(\tilde{f}) = arg \min_{\tilde{f} \in \mathcal{H}} \mathbb{E}[(y -\tilde{f}(x))^2]
\end{equation}

Here the expectation operator( $\mathbb{E}[\cdot]$) is applied over the sampling distribution that resulted in the dataset $D = \{(x_i,y_i)\}_{i=1}^n$. However, since we don't have access to this full distribution, we usually work with the empirical estimate of loss quantified by mean squared error (MSE)

\begin{equation}
\label{eq_cost}
    \hat{f} = \arg \min_{\tilde{f} \in \mathcal{H}} \frac{1}{n}\sum_{i=1}^n (y_i - \tilde{f}(x_i))^2 
\end{equation}

In machine learning and statistics literature \cite{williams2006gaussian,scholkopf2002learning,schaback2006kernel,fasshauer2007meshfree}, researchers usually work with  standard kernel functions (for the native RKHS) like Squared Exponential, Matern, or compactly supported functions such as Wendland functions \cite{wendland2004scattered}. However, such an approach implements a uniformity constraint on the structure of the basis functions throughout the approximation domain, leading to inefficient local approximations. For this reason,  kernels are often constructed to capture the properties of sophisticated approximation spaces under consideration (for example constructing kernels from feature maps \cite{scholkopf2002learning}). In this research we work with a multiscale kernel of the form (\ref{multiscale kernel}), that is constructed as a weighted combination of function evaluations at multiple scales.

\begin{equation}
\label{multiscale kernel}
    K(x,y) = \sum_{s \in \mathcal{I}} \zeta_s \sum_{j=1}^{n} \psi^s_j(x) \psi^s_j(y)
\end{equation}
Here $s$ is a scale parameter that takes values from a countable set $\mathcal{I} = \{0,1,2,..\}$. $\psi^s_j(x)$ is a function in RKHS $\mathcal{H}_s$ (centered at $x_j \in X$ and evaluated at $x \in \Omega_x$) with squared exponential function $(\exp(-r^2/\kappa_s))$ as its associated kernel. Here $r$ is the distance between the data points and $\kappa_s$ is the length scale of the kernel which adapts with the scale parameter s (as will be shown in the following sections). Also, $\zeta_s$ is a positive, scale dependent weight, such that the series $\sum_{s \in \mathcal{I}} \zeta_s$ converges. The first summation in (\ref{multiscale kernel}) explores the different scales, while the second summation evaluates functions at a particular scale. For practical algorithms, since we cannot compute (\ref{multiscale kernel}) upto infinite scales, we truncate this kernel intelligently based on the structure in the data and provide a greedy iterative algorithm that generates the required approximations. \cite{opfer2006multiscale,opfer2006tight} introduced the idea of multiscale kernels and showed their application with compactly supported and refinable set of functions. Later \cite{griebel2015multiscale} studied the approximation properties of such kernels. Instead of focusing on general approximation spaces governed by a family of multiscale kernels, we introduce and go deeper into the analysis of a very specific multiscale kernel (\ref{multiscale kernel}), that is constructed from functions taken from several standard RKHS with a varying support structure. This enables us to model multiscale interactions in a dataset. 

Using a multiscale model to capture the structure in data has been discussed in the literature. \cite{allard2012multi,maggioni2016multiscale} explore the idea of Geometric Multiresolution Analysis (GMRA) where they look at multiscale analysis from a geometric point of view and construct a subdividision tree of the observations based on exponentially decaying weights. This involves learning a low dimensional geometric manifold from data in a multiscale fashion. However, their cell size was fixed throughout the domain. This can lead to sub-optimal approximations. \cite{liao2016adaptive} provided an adaptive version of GMRA that included adaptive partitioning for better performance. However, this still is very different from the idea which we explore here, which involves constructing adaptive basis functions at multiple scales that capture the structure in the data belonging to different inherent scales. Hence instead of dividing the approximation domain, we analyze it at different resolutions globally.

A related problem that has been analyzed in the literature pertains to segmenting data into components generated by different models (analogous to scale for our case). \cite{ma2007segmentation} assumes a mixture of Gaussians as an underlying model and optimizes data segmentation. \cite{vidal2005generalized,chen2013robust,lu2006combined,ma2008estimation} focus on subspace learning based on low-rank approximations.

Going back to the multiscale kernel defined in (\ref{multiscale kernel}), we note that, in the inner summation we have considered $n$ functions at each scale. However, in practice, the basis formed by translates of a kernel functions can be very ill-conditioned \cite{de2010stability,fasshauer2009preconditioning} (depending on the data distribution). So, our proposed approach implements a greedy forward-backward strategy for intelligently choosing good bases at each scale. Our forward greedy selection is a modification of the commonly used matching pursuit algorithm \cite{mallat1993matching,jones1992simple,elad2010sparse}, also referred to as boosting in the machine learning literature \cite{buehlmann2006boosting}. \cite{tropp2004greed,donoho2005stable} analyzed the performance of matching pursuit algorithms for good approximations. Since forward greedy algorithms by themselves can lead to good approximations but inefficient basis selection \cite{zhang2011adaptive}, we also implement a backward deletion of functions  at the end of forward selection at each scale  \cite{couvreur2000optimality}. It should be noted that while selecting the basis functions intelligently at each scale, we also sample small set of data points that form the center of these functions. The union of these small sets, coupled with scale information over all the scales till convergence is regarded as our sparse representation $C^{\omega}$, where $\omega$ is the convergence scale for the dataset D. This sparse representation (as will be shown in the following sections ) enables us to make any predictions without going back to the full dataset D.

Since we sample functions intelligently from each scale till convergence, if we theoretically consider all functions at every scale simultaneously as a dictionary of functions, then choosing the final set of functions is analogous to the problem of dictionary learning \cite{mairal2009supervised,gribonval2015sample} in signal processing. Sparse methods for dictionary based algorithms have additionally proved to be effective for processing image and sound data \cite{mysore2012block,elad2010sparse}. However, since our dictionary has a very strong group structure (based on scales), direct application of these methods to our problem is inefficient.

Adaptive column sampling algorithms are another set of approaches that have close relations with the proposed method. This includes random column selection \cite{williams2000using,gittens2013revisiting}, non-deterministic \cite{gittens2013revisiting} and deterministic \cite{farahat2011novel} adaptive selection. Using clustering methods such as K-means to find new rank-K space projections \cite{zhang2008improved} is another line of thinking  that has been explored before. Although these methods do provide efficient ways to sample independent columns from a matrix, our approach is geared more towards approximating observations $y$, rather than just feature selection. 

Regarding optimal sparse representations, K-sparse models have attracted a lot of attention in the machine learning and signal processing literature. For a given dictionary, here the target is to express each datapoint through the span of atmost K elements of the dictionary. \cite{maurer2010k} provided a Hilbert space analysis of the problem. \cite{candes2007dantzig,donoho2006compressed,lewicki2000learning} are some examples of work that analyzed such sparse approximations through overcomplete dictionaries.

 Summarizing, this paper makes the following contributions:
\begin{enumerate}
    \item We propose a multiscale approximation space that acts as a native space for approximations produced from functions belonging to a set of different RKHS with kernels of varying support structure. This proves helpful in analyzing the properties of such approximations under a unified umbrella.
    \item We propose a Forward-Backward greedy approach to construct these multiscale approximations, that is shown to intelligently pick scales that are relevant for capturing the structure in any given dataset, while totally ignoring irrelevant scales. The algorithm also enables data reduction and fast predictions by utilizing the selected multiscale sparse representations.
    \item Intelligently deciding the values of large number of user dependent hyper-parameters inherent in greedy approaches is a well known problem \cite{zhang2011adaptive}. We tackle this by explicitly deriving recommendations for the single inherent hyperparameter ($\epsilon_0$) for our algorithm (Proposition \ref{epsilonrate}). This is coupled with a detailed single-scale and multiscale analysis of the approximations produced. The analytical results are further justified by the experiments in the results section. 
\end{enumerate}

\section{Multiscale Approximation space}

Let $\mathcal{I}$ be a countable set of scales to be considered in the analysis. For the current research, we consider the set of non-negative integers $\mathbb{Z}^{\geq 0}$ ($\mathcal{I} = \{0,1,2.. \}$)

\begin{lemma}
\label{summability}
For each scale $s \in \mathcal{I}$, consider $\zeta_s > 0$ such that the series $\sum_{s \in \mathcal{I}} \zeta_s$ converges. Then for every function $\psi^s_j(\cdot)$ in a RKHS $\mathcal{H}_s$ and evaluated at $x \in \Omega_x$ 
($\psi^s_j(x) = K_s(x,x_j)$, where $K_s$ is the associated kernel, $x_j \in X$ and $|X| = n$) , the summability condition
\begin{equation}
    \sum_{s \in \mathcal{I}} \sum_{j=1}^{n} \zeta_s |\psi^s_{j}(x)|^2 < \infty
\end{equation}
follows.
\end{lemma}

\begin{proof}
We need to consider two situations

Firstly, assuming x is one of the centers (i.e. $x \in X$). In this case for all s we have one term in the inner summation with $\psi_j^s(x) = 1$ (assuming the normalization $K_s(0,0) = 1$, where $K_s(\cdot,\cdot)$ is the reproducing kernel for $\mathcal{H}_s$). Hence $\forall x{'} \in X \setminus x$, we have $\psi_j^s(x{'}) < 1$. Thus
\[
\sum_{j=1}^n |\psi^s_j(x)|^2 < n \implies \sum_{s \in \mathcal{I}} \zeta_s \sum_{j=1}^n |\psi^s_j(x)|^2 < \sum_{s \in \mathcal{I}} \zeta_s n < \infty
\]
The last inequality comes from the convergence condition on series $\sum_{s \in \mathcal{I}} \zeta_s$.

Secondly, if x is not one of the centers, then all individual terms in $\sum_{j=1}^n |\psi^s_j(x)|^2$ will be less than 1. Hence the bound will naturally hold.\qed
\end{proof}

\subsection{Multiscale Kernel}

With the scale dependent functions $\psi^s$ across the scales belonging to the set $\mathcal{I}$, the function
\begin{equation}
\label{kernel2}
    K(x,y) = \sum_{s \in \mathcal{I}} \zeta_s \sum_{j \in r_s} \psi^s_j(x) \psi^s_j(y)
\end{equation}
is the multiscale kernel we target in our work. It should be noted here that unlike (\ref{multiscale kernel}), here the second summation goes only over the elements of $r_s$ instead of full $X$ ($|r_s| \leq n$), where $r_s$ represents the set of indices for linearly independent functions at scale s. Additionally, at each scale, our proposed greedy algorithm just selects a subset of the these linearly independent functions to be a part of the final approximation (as for a given dataset, if the observations lie in a subspace of the space spanned by the full set of independent functions, then we don't need the full set to model our data). However, instead of defining the kernel this way (\ref{kernel2}), if we take the second summation to n for each scale, then like \cite{opfer2006multiscale} we will have to introduce a projection operator on the weights of basis functions for enforcing uniqueness of approximation (see Lemma 2.4 in \cite{opfer2006multiscale}). Hence using such a representation simplifies our notation.

\begin{figure}[h]
\centering
\includegraphics[width=11cm]{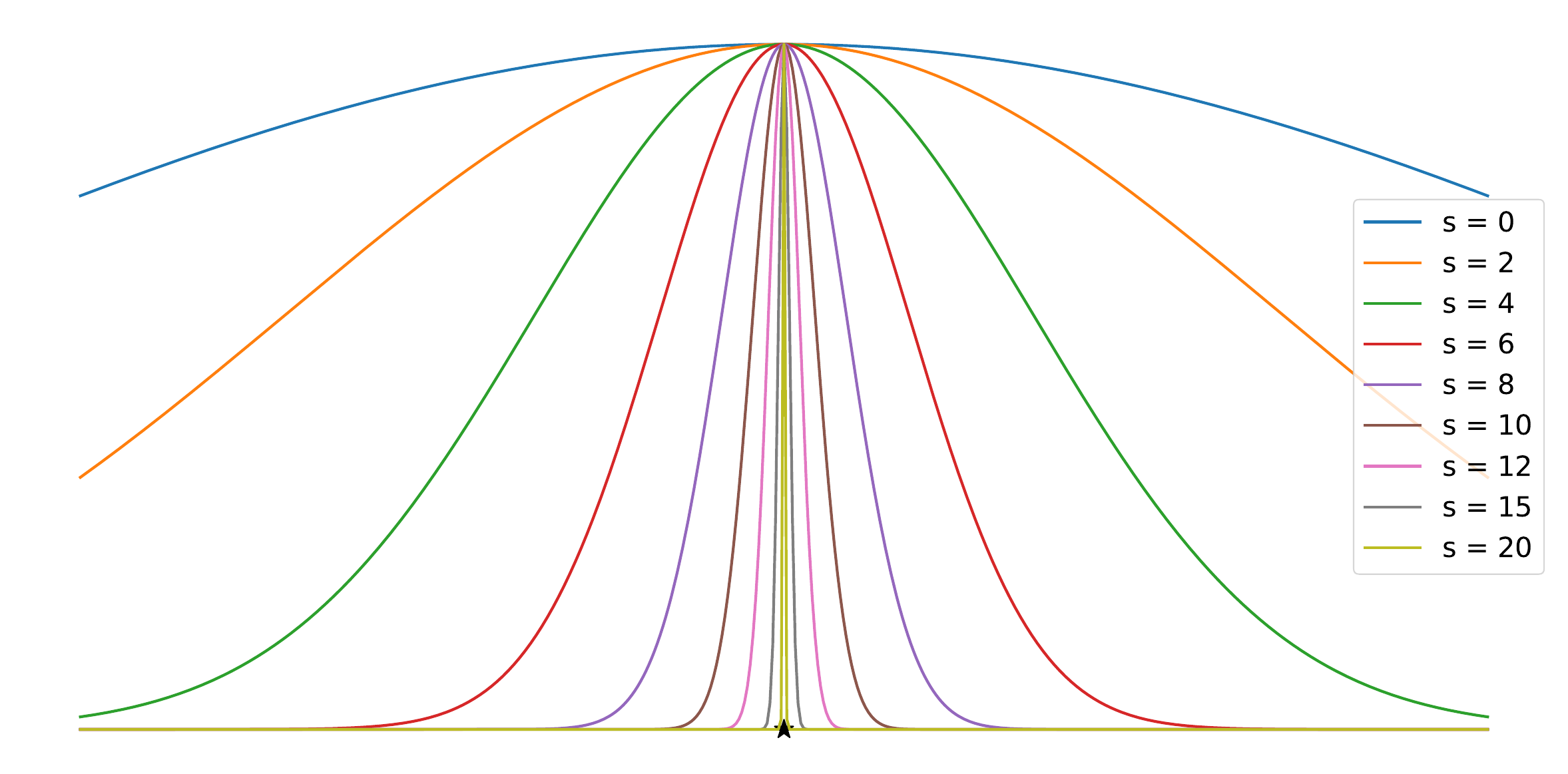}
\caption{Behavior of the functions centered at a data point (shown by star marker at the bottom) with increase in scale.}
\label{basis}
\end{figure}

Here, each of our scale dependent RKHS $\mathcal{H}_s$ (to which the function $\psi^s_j$ belong to) is uniquely determined by the positive definite kernel function indexed by scale (s) as 

\begin{equation}
\label{K^sT}
    K_s(a,b) = exp\Big(-\frac{||a -b||^2}{\kappa_s} \Big),\quad \kappa_s = T/2^s
\end{equation}
where T is a constant which is of the order of square of the radius of the dataset D (\ref{T}). The dependence on scale is expressed as an exponent of 2 (motivated from the multiresolution analysis through wavelets \cite{ferrari2004multiscale,opfer2006multiscale,bermanis2013multiscale}). This also enforces a structured increase in the rate of decay of the functions from their center (see Fig. \ref{basis} for illustration of this behavior with increasing s). 

\begin{theorem}
The multiscale kernel K in (\ref{kernel2}) is a positive definite function on arbitrary subset $\Omega_x$ of $\R^d$.
\end{theorem}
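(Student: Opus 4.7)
The plan is to verify the defining inequality of positive (semi-)definiteness directly from the series representation (\ref{kernel2}). Fix an arbitrary finite collection of pairwise distinct points $y_1,\ldots,y_m \in \Omega_x$ together with real coefficients $c_1,\ldots,c_m$, and set $Q := \sum_{i,k=1}^m c_i c_k K(y_i, y_k)$; the goal is to show $Q \geq 0$.

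First I would substitute (\ref{kernel2}) into $Q$ and interchange the finite double sum over $(i,k)$ with the infinite sums over $s \in \mathcal{I}$ and $j \in r_s$, so that $Q$ collapses into a manifest sum of squares:
\begin{equation*}
Q \;=\; \sum_{s \in \mathcal{I}} \zeta_s \sum_{j \in r_s} \Big( \sum_{i=1}^m c_i \, \psi^s_j(y_i) \Big)^{\!2}.
\end{equation*}
Each summand on the right is non-negative because $\zeta_s > 0$ and the bracketed quantity is real, so $Q \geq 0$ follows at once. A cleaner repackaging is via the feature map $\Phi(x) := (\sqrt{\zeta_s}\,\psi^s_j(x))_{s \in \mathcal{I},\, j \in r_s}$, which by Lemma \ref{summability} lies in $\ell^2$ for every $x \in \Omega_x$; the kernel is then simply $K(x,y) = \langle \Phi(x), \Phi(y)\rangle_{\ell^2}$, and any inner product in a real Hilbert space is automatically positive semi-definite. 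This viewpoint is also what one eventually wants for setting up the native RKHS associated with $K$, so it is worth stating alongside the direct calculation.

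The only non-routine step is justifying the interchange of summations, since before collecting into squares the signed products $\psi^s_j(y_i)\psi^s_j(y_k)$ can exhibit cancellations. I would appeal to absolute convergence: the elementary bound $|\psi^s_j(y_i)\psi^s_j(y_k)| \leq \tfrac12(|\psi^s_j(y_i)|^2 + |\psi^s_j(y_k)|^2)$ combined with Lemma \ref{summability}, which yields $\sum_{s} \zeta_s \sum_{j \in r_s} |\psi^s_j(y_\ell)|^2 < \infty$ for each $\ell$, gives absolute summability of the full triple series, so Fubini legitimizes the rearrangement. This bookkeeping is essentially the only obstacle; the rest of the argument is sign-trivial once the interchange is secured.
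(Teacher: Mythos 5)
Your proposal is correct and the core mechanism is the same as the paper's: substitute the series for $K$, interchange the order of summation, and collapse the quadratic form into $\sum_{s}\zeta_s\sum_{j\in r_s}\bigl(\sum_i c_i\,\psi^s_j(y_i)\bigr)^2\geq 0$. Two differences are worth noting, both in your favor. First, the paper evaluates the quadratic form only at the data points $x_1,\dots,x_n\in X$ and routes the argument through the identity $\psi^s_m(x_i)=\psi^s_i(x_m)$, interpreting the inner sum as an inner product in $\R^{|r_s|}$; that symmetry holds only when both arguments are kernel centers, so the paper's computation as written does not literally cover arbitrary points of $\Omega_x$, even though that is what the theorem asserts. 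You instead take an arbitrary finite collection $y_1,\dots,y_m\in\Omega_x$ and never invoke the symmetry, which matches the stated generality of the result. Second, you explicitly justify the interchange of the finite double sum with the infinite sum over scales via absolute convergence and Lemma \ref{summability}; the paper performs this rearrangement silently. Your feature-map remark, $K(x,y)=\langle\Phi(x),\Phi(y)\rangle_{\ell^2}$ with $\Phi(x)=(\sqrt{\zeta_s}\,\psi^s_j(x))_{s,j}$, is a clean equivalent packaging and is consistent with how the paper later builds the native space, though neither it nor the paper needs it for this theorem. One small caveat: what both arguments establish is positive \emph{semi}-definiteness (the quadratic form is $\geq 0$, not $>0$ for nonzero coefficients); the paper acknowledges this implicitly by citing the definition in \cite{fasshauer2007meshfree} and addressing strict definiteness only in the subsequent remark, so your conclusion is at the same level of strength as the paper's.
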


\begin{proof}
Let $\alpha \in \R^n$ be any n-dimensional vector of values
\begin{align*}
    \sum_{i=1}^n \sum_{j=1}^n \alpha_i \alpha_j K(x_i,x_j) &= \sum_{i=1}^n \sum_{j=1}^n \alpha_i \alpha_j \sum_{s \in \mathcal{I}} \zeta_s \sum_{m \in r_s} \psi^s_m(x_i) \psi^s_m(x_j) \\
    &= \sum_{s \in \mathcal{I}} \zeta_s \sum_{i=1}^n \sum_{j=1}^n \alpha_i \alpha_j \sum_{m \in r_s} \psi^s_m(x_i) \psi^s_m(x_j) \\
    &= \sum_{s \in \mathcal{I}} \zeta_s \sum_{i=1}^n \sum_{j=1}^n \alpha_i \alpha_j \Big< \psi^s_{i}(\cdot), \psi^s_{j}(\cdot)\Big>_{\R^{|r_s|}}\\
    &= \sum_{s \in \mathcal{I}} \zeta_s \Big< \sum_{i=1}^n \alpha_i\psi^s_{i}(\cdot), \sum_{j=1}^n \alpha_j \psi^s_{j}(\cdot)\Big>_{\R^{|r_s|}}\\
    &= \sum_{s \in \mathcal{I}} \zeta_s \Big|\Big|\sum_{i=1}^n \alpha_i \psi^s_{i}\Big|\Big|_2^2
\end{align*}
Here the third equality uses the fact $\psi^s_m(x_i) = \psi^s_i(x_m)$. Now, since we have $\sum_{s \in \mathcal{I}} \zeta_s \Big|\Big| \sum_{i=1}^n \alpha_i \psi^s_{i} \Big|\Big|_2^2 \geq 0$ (using $\zeta_s > 0$), hence kernel K is a positive definite function (using the definition in Theorem 3.2 in \cite{fasshauer2007meshfree}). 
\qed
\end{proof}

\begin{remark}
From \cite{shekhar2020hierarchical,bermanis2013multiscale}, we know that the numerical rank of the Gaussian kernel increases monotonically with s (due to the narrower support at higher scales), hence after sufficiently high s($\geq \omega$), $||\sum_{i=1}^n \alpha_i \psi_i^s||_2^2 >0$ ($=0$ only with $\alpha = 0$), making the Gramian matrix $K(x_i,x_j)_{i,j = 1}^n$ numerically positive definite. This property is later used to determine the truncation scale for the multiscale kernel.
\end{remark}

\subsection{Native approximation space for K}
In this subsection we introduce the native approximation space for the multiscale kernel in which we will produce our approximations. Following the ideas in \cite{opfer2006multiscale}, we introduce the following Hilbert space. 

\begin{definition}
We define the Hilbert space $l^2_{\zeta}$ using the weighted $l^2$ sequence
\begin{equation}
 \Big\{\{ \theta_s^j \in \R \}_{j \in r_s, s \in \mathcal{I}}, \   such\ that\ \sum_{s \in \mathcal{I}} \sum_{j \in r_s} \frac{|\theta^j_s|^2}{\zeta_s} < \infty \Big\}.
\end{equation}
and equipped with the inner product
\[
\Big<\alpha, \beta \Big>_{l^2_{\zeta}} = \sum_{s \in \mathcal{I}} \sum_{j \in r_s} \frac{\alpha_s^j \beta^j_s}{\zeta_s}
\]\qed
\end{definition}

Using $l^2_{\zeta}$ we can then generate a function space

\[
M_{\psi} (\alpha) = \sum_{s \in \mathcal{I}} \sum_{j \in r_s} \alpha^j_s \psi^s_{j}, \quad where\ \alpha = \{\alpha_s^j\} \in l^2_{\zeta}
\]

Now from \cite{opfer2006multiscale} we have the following result

\begin{theorem}
\label{mpsi}
The space $range(M_{\psi})$ equipped with the inner product
\begin{equation}
    \Big< \sum_{s \in \mathcal{I}} \sum_{j \in r_s} \alpha^j_s \psi^s_{j}, \sum_{s \in \mathcal{I}} \sum_{j \in r_s} \beta^j_s \psi^s_{j} \Big>_K = <\alpha, \beta >_{l^2_{\zeta}}
\end{equation}
is a Hilbert space.
\end{theorem}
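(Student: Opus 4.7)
The plan is to transfer the Hilbert space structure from $l^2_\zeta$ to $range(M_\psi)$ through the linear map $M_\psi$. The first step is to verify that $M_\psi(\alpha)$ genuinely defines a function on $\Omega_x$ for every $\alpha \in l^2_\zeta$. Applying Cauchy--Schwarz in the weighted form $\bigl|\sum_{s,j} \alpha_s^j \psi_j^s(x)\bigr|^2 \leq \bigl(\sum_{s,j} |\alpha_s^j|^2/\zeta_s\bigr)\bigl(\sum_{s,j} \zeta_s |\psi_j^s(x)|^2\bigr)$, the first factor is finite by the definition of $l^2_\zeta$ and the second by Lemma~\ref{summability}. Hence $M_\psi$ is pointwise defined, linear, and bounded with respect to the $l^2_\zeta$ norm.

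The heart of the argument is showing that the bracket $\langle \cdot, \cdot \rangle_K$ is well-defined on $range(M_\psi)$ rather than merely on sequences. Because $M_\psi$ need not be injective (different $\alpha$'s can produce the same $f$), I would set $N := \ker(M_\psi) \subset l^2_\zeta$, which is closed by boundedness of $M_\psi$. Then $l^2_\zeta = N \oplus N^\perp$ orthogonally, and $M_\psi$ restricts to a bijection between $N^\perp$ and $range(M_\psi)$. Defining $\langle f, g \rangle_K := \langle \alpha_f, \alpha_g \rangle_{l^2_\zeta}$ with $\alpha_f, \alpha_g$ the canonical $N^\perp$ preimages makes the form unambiguous, since any two valid preimages differ by an element of $N$, which is annihilated in the inner product against $N^\perp$. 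Bilinearity and symmetry transfer verbatim from $l^2_\zeta$, and positive definiteness follows because $\langle f, f \rangle_K = 0$ forces $\alpha_f = 0$, hence $f = M_\psi(\alpha_f) = 0$.

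Completeness is then inherited via the isometry between $N^\perp$ and $range(M_\psi)$: a Cauchy sequence $\{f_k\}$ in the $\|\cdot\|_K$-norm lifts to a Cauchy sequence of canonical preimages $\{\alpha_{f_k}\} \subset N^\perp$ in $l^2_\zeta$; since $N^\perp$ is a closed subspace of a Hilbert space, it is itself complete, so $\alpha_{f_k} \to \alpha_* \in N^\perp$, and then $f_k = M_\psi(\alpha_{f_k}) \to M_\psi(\alpha_*) \in range(M_\psi)$ in the $K$-norm.

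The main obstacle is the well-definedness step: one has to recognize that the per-scale linear independence imposed via the index sets $r_s$ only rules out within-scale redundancy, while cross-scale dependencies can still leave $\ker(M_\psi)$ nontrivial. The inner product must therefore be viewed as living on the quotient $l^2_\zeta / N$ (equivalently on $N^\perp$) before being pushed forward to $range(M_\psi)$. Once this identification is made, the remaining Hilbert-space axioms are routine transfers along the isometry $M_\psi|_{N^\perp}$, mirroring the strategy of the cited construction of Opfer.
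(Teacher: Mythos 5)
Your proof is correct, but note that the paper itself does not prove this theorem at all --- it imports it verbatim from the cited work of Opfer, so there is no in-paper argument to compare against. What you have written is essentially a reconstruction of that reference's construction: push the Hilbert structure of $l^2_{\zeta}$ forward through $M_{\psi}$, after first restricting to the orthogonal complement of $\ker(M_{\psi})$ so that the bracket is unambiguous. Your pointwise-definedness step (weighted Cauchy--Schwarz combined with Lemma~\ref{summability}) and the completeness transfer along the isometry $M_{\psi}|_{N^{\perp}}$ are both sound; for the closedness of $N=\ker(M_{\psi})$ it is cleanest to observe that $N=\bigcap_{x}\ker(\mathrm{ev}_x\circ M_{\psi})$ with each evaluation functional bounded by your Cauchy--Schwarz estimate. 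Your flagged obstacle is also a genuine and worthwhile observation: the paper argues that restricting the inner sum to the per-scale index sets $r_s$ lets it dispense with the projection operator of Opfer's Lemma~2.4, but per-scale linear independence does not preclude cross-scale dependencies among the $\psi^s_j$, so the formula $\langle M_{\psi}\alpha, M_{\psi}\beta\rangle_K=\langle\alpha,\beta\rangle_{l^2_{\zeta}}$ as literally stated is only well defined once $\alpha,\beta$ are taken to be the canonical preimages in $N^{\perp}$ (equivalently, once one works on the quotient $l^2_{\zeta}/N$). Your treatment makes this explicit where the paper's narrative glosses over it.
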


Consider again the space $\mathcal{H}_s$ for scale specific approximations. Since, the functions sampled from the kernel $K_s(\cdot,x_j) (x_j \in X)$ can't be directly used as a basis set (as it is ill-conditioned), so here we introduce $\mathcal{H}_s$ in terms of linearly independent functions sampled from the set of trial functions $D^s_X = span\{K_s(\cdot,x_i): x_i
\in X\}$

\begin{definition}
For all  $s \in \mathcal{I}$, we define subspaces
$
\mathcal{H}_s = \Big\{ \sum_{j \in r_s} \alpha^j_s \psi^s_j, \alpha_s^j \in \R \Big\}$ with set \{$\psi^s_{\cdot}$\} forming a bases in $\mathcal{H}_s$. We further equip them with the inner products
\[
    \Big< \sum_{j \in r_s} \alpha^j_s \psi^s_{j}, \sum_{j \in r_s} \beta^j_s \psi^s_{j} \Big>_{\mathcal{H}_s} = \sum_{j \in r_s} \alpha_s^j \beta_s^j
\]
\end{definition}

Using the space $\mathcal{H}_s$ in the construction of the native space for the multiscale kernel we obtain the following result.

\begin{theorem}
\label{native}
Consider $f_s \in \mathcal{H}_s$ such that $\sum_{s \in \mathcal{I}} \frac{1}{\zeta_s} ||f_s||^2_{\mathcal{H}_s} < \infty$. Then the space $\mathcal{H} = \Big\{ f: \Omega_x \to \R:\ f = \sum_{s \in \mathcal{I}} f_s with\ f_s \in \mathcal{H}_s \Big\}$
equipped with the inner product
\begin{equation}
    \Big<f,g\Big>_{\mathcal{H}} = \sum_{s \in \mathcal{I}} \frac{1}{\zeta} <f_s, g_s>_{\mathcal{H}_s}
\end{equation}
(assuming the representation $g = \sum_{s \in \mathcal{I}} g_s$) is a RKHS and the native space of the multiscale kernel K.

\end{theorem}

\begin{proof}
Starting with the space $\mathcal{H}$
\[
f = \sum_{s \in \mathcal{I}} f_s = \sum_{s \in \mathcal{I}} \sum_{j \in r_s} \alpha^j_s \psi^s_j = M_{\psi} (\alpha), \quad \text{where }\alpha \in l^2_{\zeta}
\]
Also, for the given condition we have,
\[ \sum_{s \in \mathcal{I}} \frac{1}{\zeta_s} ||f_s||^2_{\mathcal{H}_s} = \sum_{s \in \mathcal{I}} \frac{1}{\zeta_s} \sum_{j \in r_s} (\alpha^j_s)^2 < \infty
\]
Now, considering the given inner product for the space $\mathcal{H}$
\[
<f,g>_{\mathcal{H}} = \sum_{s \in I} \frac{1}{\zeta_s} <f_s,g_s>_{\mathcal{H}_s} = \sum_{s \in \mathcal{I}} \frac{1}{\zeta_s} \sum_{j \in r_s} \alpha_s^j \beta^j_s
\]
\[
\implies <f,g>_{\mathcal{H}} = <M_{\psi}(\alpha), M_{\psi}(\beta) >_K = <\alpha, \beta >_{l^2_{\zeta}}
\]
Hence $\mathcal{H}$ is the alternate representation of the space $(range(M_{\psi}), <\cdot, \cdot>_K)$ and hence a Hilbert space (by Theorem \ref{mpsi}). Now for $\mathcal{H}$ to be a RKHS, we just have to prove that the evaluation functional operator on $\mathcal{H}$ is bounded (i.e. there exists some $M > 0$ such that, $|L_x(f)| \leq M ||f||_{\mathcal{H}}$ ). Therefore $\forall f \in \mathcal{H}$ (assuming $L^s_x$:  evaluation operator on $\mathcal{H}_s$)
\begin{align*}
    |L_x(f)| &= |f(x)| = \Bigg|\sum_{s \in \mathcal{I}} \sum_{j \in r_s} \alpha^j_s \psi^s_{j}(x) \Bigg| \leq \sum_{s \in \mathcal{I}} \Bigg|\sum_{j \in r_s} \alpha^j_s \psi^s_j (x) \Bigg| = \sum_{s \in \mathcal{I}} |L^s_x(f_s)|\\
    & \leq \sum_{s \in \mathcal{I}} M^s ||f_s||_{\mathcal{H}_s} \quad (\text{$\mathcal{H}_s$ is a RKHS and $M^s$ a bounding constant})\\
    &= \sum_{s \in \mathcal{I}}M^s \sqrt{\sum_{j \in r_s} (\alpha^j_s)^2} = \sum_{s \in \mathcal{I}} M^s \zeta_s \sqrt{\sum_{j \in r_s} \frac{(\alpha^j_s)^2}{(\zeta_s)^2}} \quad (\text{since $\zeta_s >0 \forall s$})\\
    & \leq \sum_{s \in \mathcal{I}} M^s \zeta_s \sum_{j \in r_s} \frac{|\alpha^j_s|}{|\zeta_s|} \quad (\text{since $||\cdot||_2 < ||\cdot||_1$ in $\R^n$})\\
    & \leq \frac{max(M^s \zeta_s)}{min(|\alpha^j_s|)} \sum_{s \in \mathcal{I}} \sum_{j \in r_s} \frac{|\alpha_s^j|^2}{\zeta_s} 
\end{align*}
Identifying $\sum_{s \in \mathcal{I}} \sum_{j \in r_s} \frac{|\alpha_s^j|^2}{\zeta_s}$ as $||f||^2_{\mathcal{H}}$ concludes our proof for showing boundedness of  $L_x$. Hence $\mathcal{H}$ is a RKHS. Now to show K is its reproducing kernel, consider the function $K(x,\cdot) = \sum_{s \in \mathcal{I}}\zeta_s \sum_{j \in r_s} \psi^s_{j}(x) \psi^s_j(\cdot)$, we need to show $K(x,\cdot) \in \mathcal{H}$ which follows from
\[
||K(x,\cdot)||^2_\mathcal{H} = \sum_{s \in \mathcal{I}}\sum_{j \in r_s} \frac{\zeta_s \psi^s_j(x) \zeta_s \psi^s_j(x)}{\zeta_s} = \sum_{s \in \mathcal{I}} \sum_{j \in r_s} \zeta_s |\psi^s_j(x)|^2 < \infty
\]

Here the finiteness of summation comes from Lemma \ref{summability}. Now to show the reproducing property, let $f = \sum_{s \in \mathcal{I}} \sum_{j \in r_s} \alpha^j_s \psi^s_j$, then we have
\begin{align*}
    <f,K(x,\cdot)>_{\mathcal{H}_s} &= \Big< \sum_{s \in \mathcal{I}} \sum_{j \in r_s} \alpha^j_s \psi^s_{j},\sum_{s \in \mathcal{I}} \sum_{j \in r_s} \zeta_s \psi^s_{j}(x)\psi^s_{j}(\cdot) \Big>\\
    &= \sum_{s \in \mathcal{I}} \sum_{j \in r_s} \frac{\alpha^j_s \zeta_s \psi^s_{j}(x)}{\zeta_s}\\
    &= f(x)
\end{align*}
Thus establishing $K$ as the reproducing kernel of $\mathcal{H}$, thereby concluding the proof.
\qed
\end{proof}

\section{Multiscale Algorithm}
The multiscale kernel introduced in the previous section consider scales from a countable set $\mathcal{I}$. However, for a practical algorithm we need to truncate the set to a computationally manageable upper limit on the scale. Fig. \ref{basis} shows the behavior of the these functions with increasing scales. From the extremely narrow support (numerically) of these functions at higher scales (approaching a Dirac delta behavior) it can be shown that at such scales additional basis functions will not improve the approximation, justifying the finite scale truncation. In fact, the result presented in Lemma \ref{truncation1} shows that if truncation is carried out at a sufficiently large scale such that the Gramian matrix obtained from the truncated kernel is numerically positive definite, then the weights obtained for such a multiscale kernel will be identical to the weights for the full kernel (with infinite scales). For clarity, we will denote the truncated kernel as $K^{\omega}$, where $\omega$ is the truncation scale.

In the literature \cite{opfer2006multiscale,griebel2015multiscale}, typically for approximations in the native space of the multiscale kernels, the corresponding Gramian matrix (after suitable truncation) is assembled to formulate the linear system, and then solved for coefficients. However, since in this work our aim is to also exploit the redundancy in this basis set (ultimately leading to data reduction), we avoid solving the interpolation problem with the Gramian matrix $K^{\omega}(x_i,x_j)_{i,j =1}^n$, and directly work with greedily constructing the approximations that have the following form.
\begin{align*}
    Af^{\omega}(\cdot) &= \sum_{i=1}^n\phi^{\omega}_i K^{\omega}(x_i,\cdot) = \sum_{i=1}^n \phi^{\omega}_i \Bigg( \sum_{s = 0}^{\omega} \zeta_s \sum_{j \in r_s}\psi_j^s(x_i) \psi_j^s(\cdot) \Bigg)\\
    &= \sum_{s=0}^{\omega} \sum_{j \in r_s} \Bigg(\sum_{i=1}^n \zeta_s \phi^{\omega}_i \psi_j^s(x_i) \Bigg) \psi_j^s(\cdot) = \sum_{s=0}^{\omega} \sum_{j \in r_s} \alpha_s^j \psi_j^s(\cdot)
\end{align*}

In this section we introduce our multiscale algorithm which operates on a dataset $D = [X,y]$, providing an approximation to the underlying function $f$ expressed as $\sum_{s = 0}^{\omega} \sum_{x_j \in X_s} \alpha_s^j \psi^s_j$. Here the inner summation is over $X_s$, representing the set of points, for which the associated functions (centered at these points) were chosen to a part of the final approximation, i.e. we have $|X_s| \leq |r_s|$, where $|r_s|$ denotes the number of numerically independent functions at scale s. Hence, instead of solving for $\phi^{\omega}_i$ (weights in the approximation $\sum_{i = 1}^n \phi^{\omega}_i K^{\omega}(x_i,\cdot)$), we directly solve for coefficients ($\alpha^j_s$) of the functions $\psi_j^s$ at each scale s. This also has the benefit of not having to decide the weights $\zeta_s$ at each scale, which was the case for \cite{opfer2006multiscale}.  Since at each scale translates of kernel functions provide a highly redundant set of functions for approximation, so we implement a modification of forward greedy strategy for selecting a good set of functions for approximation. However, following the ideas of \cite{zhang2011adaptive} for more effective sparsification of our basis at each scale, the forward selection algorithm is followed by the backward deletion procedure. The motivation for such a procedure comes from the fact that during forward selection due to the greedy nature of the algorithm, we might sample more than the required number of functions for good approximation. The backward procedure cleans out these additional functions, that may provide independent information but are not numerically useful for modeling the given data.
\IncMargin{1em}
\begin{algorithm}
\SetKwData{Left}{left}\SetKwData{This}{this}\SetKwData{Up}{up}
\SetKwFunction{Union}{Union}\SetKwFunction{FindCompress}{FindCompress}
\SetKwInOut{Input}{Input}\SetKwInOut{Output}{Output}
\Input{Data: $D = (X,y) \subset \Omega_x \times \Omega_y$\\
Truncation scale: $\omega$\\
Starting tolerance: $\epsilon_0$\\
}
\Output{Sparse representation: $C^{\omega}$ \\
Model weights: $\Theta^{\omega}$
\emph{\noindent\rule{10.1cm}{0.6pt}}
}
\emph{Initialize: $s=0, B^s = [\cdot],\Theta^s = [\cdot], C^s = [\cdot], f^{s-1} = 0,t_s = y$}\\
Compute supporting parameters: $\gamma = \frac{\epsilon_0 \vartheta_0^2}{||t_0||_2}, \Delta = \frac{\epsilon_0^2\vartheta_0^2}{n}$\ ($\vartheta_s$ as in (\ref{vartheta}))\\
\While{$s \leq \omega$}{
$B_s,\Theta_s,C_s,\mathcal{E} \leftarrow Forward\_selection(\epsilon_s,D,s,t_s)$\\
$B_s,\Theta_s,C_s \leftarrow Backward\_deletion(\epsilon_s,B_s,\Theta_s,C_s,\mathcal{E},t_s)$\\
\If{$C_s \neq \{\phi\}$}{
$B^s,\Theta^s,C^s \leftarrow Update\_model(B^s,\Theta^s,C^s,B_s,\Theta_s,C_s)$\\
$f_s \leftarrow B_s\Theta_s$\\
}
\Else{
$f_s \leftarrow \textbf{0}$\\
}
$\mathcal{E}^s \leftarrow MSE(t_s - f_s)$\\
$f^s \leftarrow f^{s-1} + f_s$\\
$s \leftarrow s + 1$\\
$t_s \leftarrow t_{s-1} - f_{s-1}$\\
$\epsilon_s \leftarrow \max \Big( \frac{\gamma ||t_s||_2}{\vartheta_s^2},\frac{\sqrt{n \Delta}}{\vartheta_s}\Big)$\\
}
\textbf{Return} $[C^{\omega},\Theta^{\omega}, \mathcal{E}^{\omega}, f^{\omega}]$
\caption{$Multiscale\  Algorithm$}\label{Algo1}
\end{algorithm}\DecMargin{1em}

The complete algorithm has been presented as Algorithm \ref{Algo1}, \ref{Algo2}, \ref{Algo3} and \ref{Algo4} in this section. For the rest of the paper, we denote the basis matrix at scale s as $B_s$ (hence $B_s = [b^j_s]_{x_j \in X_s}$, where $b^j_s = \psi^s_j|_X$ are n-dimensional column vectors with j as a proxy for function centered at $x_j \in X_s$). While moving up the scales, useful columns till scale s will be concatenated into a matrix $B^s$ (hence $B^s = [B_0,B_1,...,B_s]$). We will follow similar notations for weights of these functions ($\Theta_s, \Theta^s$). At any scale, we denote the sparse representation as $C_s$ ($|X_s| \times (d+2)$ matrix). $C_s$ is composed of the x and y coordinate of the chosen points (centers of chosen functions) along with the current scale in the last column as an additional coordinate. Like the notation for the bases, sparse representation till scale s is represented as $C^s$. For notational convenience, we will denote the number of data points in sparse representation as $|C_s|$ for scale s and $|C^s|$ for cumulative scales till scale s. Hence, we have $|C_s| = |X_s|$ and $|C^s| = \sum_{i=0}^s|X_i|$.

The multiscale algorithm (Algorithm \ref{Algo1}) accepts 2 hyperparameters. Here $\omega$ is the maximum scale (starting from 0) we are willing to explore, and will be determined based on K-fold cross validation or directly analyzing the MSE of reconstruction and truncating according to an error budget for noise free datasets. Alternatively, can also truncate based on the accumulated size of the sparse representation ($C^s$). The other hyperparameter ($\epsilon_0$) provides a lower bound on the absolute value of the normalized inner product for any chosen function and the remaining residual (Proposition \ref{epsilonrate} provides directions for properly initializing $\epsilon_0$). We provide more information on these hyperparameters later on in this section. The input dataset is in the form of X and y, with $X \in \R^{n \times d}$ being the d-dimensional coordinates of data locations, and $y \in \R^n$ are the observations made. The algorithm begins by initializing $s$ to $0$, the current available approximation $f^{s-1}$ to a null $n-dim$ vector, and the current target $t_s$ to $y$. Before moving onto the main part of the algorithm, two key supporting parameters ($\gamma, \Delta$) are initialized (more details on these parameters will be provided in Theorem \ref{set_epsilon}). Basically, these parameters ensure that whenever a new function is added to the basis set, the reduction in MSE is lower bounded (through $\Delta$), while also ensuring that the resulting basis is well conditioned (through $\gamma$). These parameters are used later on to update the threshold parameter $\epsilon_{s}$ for the subsequent scales in a controlled manner (more details in Proposition \ref{epsilonrate}). Here, in step 2 of Algorithm \ref{Algo1}, we use the quantity $\vartheta_s$, which is defined as follows and plays a very crucial role in our overall analysis
\begin{equation}
\label{vartheta} \vartheta_s = \min_j ||b^j_s||_2
\end{equation}

Hence at any scale s, $\vartheta_s$ is the norm of the function centered at the most segregated location (for it to be minimum norm depending on the decay behavior of $\psi^s_j$) in the overall approximation domain. As an example, for uniformly distributed samples, the chosen function will be located at the edge of the domain. The main part of Algorithm \ref{Algo1} consists of a loop over scales, till the counter reaches $\omega$. At each scale, we firstly select the basis that are found to be suitable for modeling $t_s$ (implemented as $Forward\_selection(\cdot)$ in Algorithm \ref{Algo3}). Then, any unnecessary functions are removed by the subroutine $Backward\_deletion(\cdot)$ shown as Algorithm \ref{Algo4}. This forward-backward procedure sometime leads to a situation where no functions are chosen, making the current scale redundant. However, if some new function do get chosen ($B_s,C_s$ are non-empty), we update our global basis set $B^s$ by appending $B_s$ to current $B^s$. Similar appends are made to the coordinate of projection $\Theta^s$ and sparse representation $C^s$ through the $Update\_model(\cdot)$ subroutine. Following this, the MSE ($\mathcal{E}^s$), and the current approximation ($f^s$) are sequentially updated. After increasing the scale counter, we then update the target $t_s$ and finally $\epsilon_s$ (using $\gamma$ and $\Delta$ computed in step 2) for the next scale as (motivation behind such an update in provided in Remark \ref{rem_set_epsilon}.1)
\begin{equation}
    \epsilon_s = \max\Bigg\{ \frac{\gamma||t_s||_2}{\vartheta_s^2} ,\frac{\sqrt{n \Delta}}{\vartheta_s}   \Bigg\}
\end{equation}

This update leads to a structured increase in the value of $\epsilon_s$ (Proposition \ref{epsilonrate}) which makes the algorithm more selective as we move up the scales.

\IncMargin{1em}
\begin{algorithm}
\SetKwData{Left}{left}\SetKwData{This}{this}\SetKwData{Up}{up}
\SetKwFunction{Union}{Union}\SetKwFunction{FindCompress}{FindCompress}
\SetKwInOut{Input}{Input}\SetKwInOut{Output}{Output}
\Input{Sparse representation: $C^{\omega} = [C_0, C_1,...,C_{\omega}]$\\
Model weights: $\Theta^{\omega} = [\Theta_0, \Theta_1,...,\Theta_{\omega}]$\\
Normalizing constant: $T \in \R$\\
Test data: $X^t \in \R^{t \times d}$
}
\Output{Prediction: $P^t \in \R^{t}$ \\
\emph{\noindent\rule{10.1cm}{0.6pt}}
}
\emph{Initialization: $s = 0,P^t = \textbf{0} \in \R^t$}\\
\While{$s \leq \omega$}{
Compute length scale: $\kappa_s \leftarrow T/2^{s},\quad \text{(T defined in  (\ref{T}))}$\\
\For{$x_j$ in $C_s$}{
Compute distance vector $d \in \R^t$, between all $ x \in X^t$ and $x_j \in C_s$\\
Compute basis function: $b_s^j \leftarrow \exp \Big(-\frac{d^2}{\kappa_s}\Big)$\\
Update the prediction: $P^t \leftarrow P^t + b^j_s \theta^j_s, \quad (\theta^j_s \in \Theta_s)$\\
}
$s \leftarrow s + 1$
}
\textbf{Return} $P^t$
\caption{$Prediction(C^{\omega}, \Theta^{\omega}, X^t$)}\label{Algo2}
\end{algorithm}\DecMargin{1em}

At termination, Algorithm 1 returns $C^{\omega}$ and $\Theta^{\omega}$ as the final output, that is sufficient to make any future predictions without going back to the full dataset D. We show the steps involved in making predictions in Algorithm \ref{Algo2}. Besides accepting the sparse representation ($C^{\omega}$) and model weights ($\Theta^{\omega}$) computed in Algorithm \ref{Algo1}, Algorithm \ref{Algo2} requires the normalizing constant T (shown in (\ref{T})) and the data locations $X^t$ at which predictions are required. The algorithm begins by initiating the scale to 0 and current prediction $P^t$ to a null vector. This is followed by a a nested \textit{for} loop within a \textit{while} loop. The outer loop here covers all the scales, while the inner loop extracts basis functions from individual scales. For every $x_j \in C_s$ (depending on the way $C_s$ is defined $C_s = [X_s, y_s, s]$, when we say $x_j \in C_s$, we mean $x_j \in X_s$), $b_s^j$ are sequentially constructed, and using the corresponding weight $\theta_s^j \in \Theta_s$, we update the approximation $P^t$.

\begin{remark}
    For Algorithm 2, we have denoted the input as $C^{\omega}$ and $\Theta^{\omega}$. However, we don't need to wait till scale $\omega$ to make predictions. Algorithm \ref{Algo2} can be executed using $C^s$, $\Theta^s$ at any scale.
\end{remark}

With the basics of multiscale learning algorithm and prediction using sparse representation explained, we now go back to selection and deletion of basis at a particular scale. Starting with Algorithm \ref{Algo3}, we first describe the forward selection of functions at any scale s. Here $\epsilon_s$ is the tolerance level (lower bound) for the magnitude of projection of the current residual on the candidate column $b^j_s$. Additionally, input $t_s$ is the target to be approximated at the current scale.

\IncMargin{1em}
\begin{algorithm}
\SetKwData{Left}{left}\SetKwData{This}{this}\SetKwData{Up}{up}
\SetKwFunction{Union}{Union}\SetKwFunction{FindCompress}{FindCompress}
\SetKwInOut{Input}{Input}\SetKwInOut{Output}{Output}
\Input{Current scale and tolerance level: $s, \epsilon_s$\\
Data: $D$\\
Current target: $t_s$\\
}
\Output{Assembled basis and weight set: $B_s, \Theta_s$\\
Sparse representation: $C_s$\\
Current MSE: $\mathcal{E}$\\
\emph{\noindent\rule{10.1cm}{0.6pt}}
}

$Initialize$: $K_s = [b_s^i]_{i=1}^N$ using (\ref{T}) and  (\ref{K^sT}), $r = t_s, B_s = [\cdot],C_s =[\cdot], \Theta_s = [\cdot]$\\
Get the best column from $K_s$: $j \leftarrow \arg \min_i \{ \min_{z_i}||z_i\cdot b_s^i - r||_2^2$\} (use eq. (\ref{colchoose}))\\
Compute absolute weight for the added column $b_s^j$: $z_j = \frac{|r^T b_s^j|}{||b_s^j||_2^2}$\\
\While{$z_j \geq \epsilon_s$}{
\If{$B_s\ empty$}{
Initialize basis and sparse representation: $B_s \leftarrow [b^j_s]$, $C_s \leftarrow [x_j,y_j,s]$\\
Compute projection coordinate: $\Theta_s \leftarrow \frac{B_s^T(t_s)}{B_s^TB_s}$\\
}
\Else{
Append the chosen functions: $B_s,C_s \leftarrow Update\_add(B_s,C_s,b^j_s,x_j,y_j,s)$\\
Update projection coordinate of $t_s$ on $B_s$: $\Theta_s$ using (\ref{theta_s}) and (\ref{theta_s2})\\
}
Update residual: $r \leftarrow t_s - B_s\Theta_s$\\
Choose the next column: $j \leftarrow \arg \min_i \{ \min_{z_i}||z_i\cdot b_s^i - r||_2^2$\\
Compute $z_j$ like in step 3\\
}
Compute error: $\mathcal{E} \leftarrow MSE(r)$\\

\textbf{Return} $[B_s,\Theta_s,C_s,\mathcal{E}]$
\caption{$Forward\_selection(\epsilon_s, D, s, t_s)$}\label{Algo3}
\end{algorithm}\DecMargin{1em}

Algorithm \ref{Algo3} begins with computation of the kernel matrix ($K_s$) from the kernel defined in (\ref{K^sT}). Here T (normalizing constant) is set to be

\begin{equation}
    \label{T}
    T = 2(Diameter(X)/2)^2
\end{equation}
with $Diameter(X)$ being the maximum distance between any pair of data points in X. This value of $T$ for the kernel in (\ref{K^sT}), enables numerically global support of the functions at initial scales. This allows capturing of global changes at smaller scales, with local/high frequency changes modeled by `narrower' functions at higher scales. With this kernel, we then choose the best column for modeling the current residual $r$. In essence, solution to step 2 in Algorithm \ref{Algo3} is to choose the column which maximizes the following term

\begin{equation}
    \label{colchoose}
    j = \arg \max_i \frac{|r^T b^i_s|^2}{||b^i_s||_2^2} 
\end{equation}

We then extract the chosen $j^{th}$ column from $K_s$ ($b^j_s$), and if the magnitude of projection on this added column (computed in step 3) is sufficiently large, we choose $b^j_s$ to be the first vector in our basis set. We also initialize the sparse representation with the data point $x_j,y_j$ at which the $j^{th}$ function is centered. An additional column is added in $C_s$ for the scale number (as s is required for uniquely determining this basis function). This is followed by the computation of the coordinate of projection $\Theta_s$ for the target $t_s$ on $B_s$ that is composed of a single function. We then update the residual $r$ and choose the next candidate vector $b^j_s$ to be added to the current basis set $B_s$. This procedure is then essentially repeated until $z_j$ goes below the tolerance level $\epsilon_s$ for the chosen vector $b^j_s$. Here, $Update\_add(\cdot)$ in line 9 appends $b^j_s$ to the current basis set $B_s$. Additionally, it appends $[x_j, y_j, s]$ to the sparse representation $C_s$.

Here, we have chosen to put a tolerance on the inner product as a termination criterion because it ensures that our basis remains well conditioned for ordinary least square computation, and also the reduction of MSE with the addition on $b^j_s$ is lower bounded (Theorem \ref{set_epsilon}). This is crucial for our algorithm as kernel functions which constitute our basis are highly redundant and ill-conditioned (specially at the initial scales \cite{shekhar2020hierarchical}). This is fundamentally different than the traditional forward greedy method \cite{elad2010sparse}, that chooses a function from the dictionary that is most correlated with the current residual (without any additional checks), and puts a tolerance on the residual as a termination criterion. Similar residual based tolerance is also not suitable for our approach because it is difficult to pre-determine the amount of information in the data from a particular scale without running the risk of eventually choosing a function that makes the bases ill-conditioned. 

For updating the coordinate of projection each time, instead of solving the full least squares problem we pose it as:
\[
\min_{\Theta_s} \Big|\Big| \left[
\begin{array}{cc}
     B_s & b^j_s
\end{array}
\right] \Theta_s - t_s \Big|\Big|_2^2
\]
which leads to
\begin{equation}
\label{theta_s}
    \Theta_s = \left[
\begin{array}{cc}

     B_s^TB_s & B_s^Tb^j_s \\
     {b^j_s}^TB_s & {b^j_s}^Tb^j_s
\end{array}
\right]^{-1} \left[
\begin{array}{c}

     B_s^Tt_s \\
     {b^j_s}^Tt_s
\end{array}
\right]
\end{equation}
which can be implemented using the following matrix inversion Lemma \cite{elad2010sparse} and taking into account $(B_s^TB_s)^{-1}$ available from the last time a function was added to $B_s$

\begin{equation}
\label{theta_s2}
    \left[
\begin{array}{cc}
     M_0 & b_0\\
     b_0^T & c
\end{array}
\right]^{-1} = \left[
\begin{array}{cc}
     M_0^{-1} + p M_0^{-1} b_0 b_0^T M_0^{-1} & -pM_0^{-1} b_0\\
     -p b_0^TM_0^{-1} & p
\end{array}
\right]
\end{equation}

where $p = 1/(c - b_0^TM_0^{-1} b_0)$. Since at the beginning of Algorithm \ref{Algo3}, we begin with just one basis function, hence overall we never perform a formal matrix inversion. Coordinate computation through this lemma is implemented on line 10 in Algorithm \ref{Algo3}. In this way we keep on assembling $B_s$, until we can no longer find any useful column to further improve the approximation. Once we hit the tolerance level $\epsilon_s$, we return the quantities $B_s, \Theta_s, C_s$ and the current MSE $\mathcal{E}$.

\IncMargin{1em}
\begin{algorithm}
\SetKwData{Left}{left}\SetKwData{This}{this}\SetKwData{Up}{up}
\SetKwFunction{Union}{Union}\SetKwFunction{FindCompress}{FindCompress}
\SetKwInOut{Input}{Input}\SetKwInOut{Output}{Output}
\Input{Current scale tolerance level: $\epsilon_s$\\
Forward selection results: $B_s,\Theta_s,C_s$\\
Current MSE: $\mathcal{E}$\\
Current scale target: $t_s$\\
}
\Output{Refined results: $B_s, \Theta_s, C_s$\\
\emph{\noindent\rule{10.1cm}{0.6pt}}
}
\While{True}{
Compute least important index: $j \leftarrow \arg \min_i \{|\theta_s^i|\cdot||b_s^i||_2\}$\\
Update the sets: $B_{s0},C_{s0},\Theta_{s0} \leftarrow Update\_del(B_s,C_s,\Theta_s,t_s,j)$\\
Compute residual: $r = t_s - B_{s0} \Theta_{s0}$\\
\If{$MSE(r) - \mathcal{E} \leq ({\vartheta_s^2\epsilon_s^2})/{n}$}{
$B_s,\Theta_s,C_s \leftarrow B_{s0}, \Theta_{s0}, C_{s0}$\\
}
\Else{
    $break$\\
}
}
\textbf{Return} $[B_s,\Theta_s,C_s]$
\caption{$Backward\_deletion(\epsilon_s,B_s,\Theta_s,C_s,\mathcal{E},t_s)$}\label{Algo4}
\end{algorithm}\DecMargin{1em}

Once we obtain the basis matrix $B_s$ for the current scale from Algorithm \ref{Algo3}, we move forward to check if we can prune this selected set, while maintaining its effectiveness to approximate y (shown as Algorithm \ref{Algo4}). The necessity and importance of this backward procedure was justified in \cite{zhang2011adaptive} by the argument that with a greedy procedure for feature selection, it is always a possibility that we might select more functions than the required number to reach the current approximation accuracy. This has to do with the order in which the functions are selected. The only disadvantage \cite{zhang2011adaptive} mentioned for backward procedure, was its inefficiency for the case with a relatively very high number of functions as compared to the number of data points. This is because for such a case, the model is already highly overfit and there is no clear preference regarding which function to remove first. In our case, we fortunately don't have that problem, because our number of columns are strictly less than or equal to the number of data points (as $B_s$ returned from Algorithm \ref{Algo3} can have atmost n columns). 

Algorithm \ref{Algo4} focuses on the idea of sequentially removing the least important columns and then updating the approximation each time until the MSE starts to increase considerably (again controlled by the $\epsilon_s$). We define the least important column as a column j, deletion of which leads to the least increase in MSE. For this we have the following result

\begin{lemma}
\label{backdel_th0}
For backward deletion, the best column to remove satisfying

\begin{equation}
    \label{back}
    j = \arg \min_i MSE(t_s - B_s \Theta_{i,s}^{*}),\quad \text{where  $\Theta_{i,s}^{*} = \Theta_s - \theta_s^i e_i$}
\end{equation}

(with $e_i = [0,0,..,1,..0] \in \R^{|\Theta_s|}$ having 1 at $i^{th}$ location and $\theta_s^i \in \Theta_s$) is given by the index that minimizes the product of the absolute value of current coefficients and corresponding norm of the basis set.
\end{lemma}

\begin{proof}
    We start by defining
    \[
    Q(i) = MSE(t_s - B_s \Theta_{i,s}^{*}) = \frac{1}{n}||t_s - (B_s \Theta_s - b_s^i \theta_s^i)||_2^2 
    \]
    
    where $b_s^i$ is a column in $B_s$ (centered at $x_i$) with $\theta_s^i$ being the corresponding coordinate in $\Theta_s$. Hence
    \begin{align*}
        Q(i) &= \frac{1}{n}||t_s - B_s \Theta_s||_2^2 + \frac{1}{n}||b_s^i \theta_s^i||_2^2 + \frac{2}{n}(t_s - B_s \Theta_s)^T(b_s^i\theta_s^i)
    \end{align*}
    denoting $t_s - B_s \Theta_s$ as the residual $r$ we get
    \[
    Q(i) = \frac{1}{n}||r||_2^2 + \frac{{\theta_s^i}^2}{n}||b_s^i||_2^2 + \frac{2 \theta_s^i}{n}r^Tb_s^i
    \]
    
    However, since 
    $r$ was the residual of the ordinary least squares when $b_s^i$ was part of the basis functions. Hence $r \perp b_s^i$ for all columns $i$. Also the first term is common for all $i$. Hence the problem of minimizing $Q(i)$ can be replaced by an equivalent problem, proving the presented result
    \[
    j = \arg \min_i Q(i) = \arg \min_i |\theta_s^i|\cdot||b_s^i||_2
    \]
    \qed
\end{proof}

 Line 2 in Algorithm \ref{Algo4} use Lemma \ref{backdel_th0} to choose the least important column (function centered at $x_j$) for deletion. We then define a temporary basis set, coordinate set, and sparse representation ($B_{s0},\Theta_{s0}, C_{s0}$) obtained after removal of the chosen column. This is achieved through the sub-routine $Update\_del(\cdot)$. Here the temporary weights for the updated basis set ($B_{s0}$) can either be computed using (\ref{theta_s}) and (\ref{theta_s2}) and solving in reverse for $(B_{s0}^TB_{s0})^{-1}$ for removal of a column (which would first involve returning $(B_s^TB_s)^{-1}$ from Algorithm \ref{Algo3} for full basis $B_s$ and using it compute our way backwards) or we can directly solve the least squares problem associated with the new basis $B_{s0}$. With this information, we update the residual, and if the total increase in MSE (since the start of the deletion procedure) is bounded above by $\vartheta^2_s \epsilon_s^2/n$ (justified in Remark \ref{rem_set_epsilon}.2), then the deletion procedure is continued, else the current quantities $B_s$, $C_s$ and $\Theta_s$ are returned. This is a rather strict threshold criterion as it places a bound on the cumulative increase in MSE. We also discuss an alternate in Proposition \ref{th_backdel} which considers a threshold with respect to each column removal individually.

\section{Approximation Analysis}

In this section we will analyze the properties of the proposed multiscale algorithm. Here we begin by summarizing a few necessary details
\begin{enumerate}
    \item As  mentioned in the introduction, we assume the $(x_i,y_i) \in D$ are $i.i.d$ samples from some underlying distribution $P(x,y)$. More specifically, we assume $\{y_i\}_{i=1}^n$ to be i.i.d Gaussians. Hence, there exists $\sigma \geq 0$ such that $\forall i$ and $\forall t \in \R$
    \[
    \mathbb{E}_{y_i}e^{(y_i - \mathbb{E}y_i)} \leq e^{\sigma^2 t^2/2}
    \]
    This result comes from the definition of sub-Gaussian random variables \cite{zhang2011adaptive}, which contain Gaussian random variables as a special case.
    \item At any scale s, we assume $\alpha_s \in \R^n$ to be the set of weights that provides best possible approximation to y (in terms of approximation accuracy). Hence $Support(\alpha_s) = \{j: \alpha_s(j) \neq 0\}$ (denoted as $S(\alpha_s)$) can be computed by algorithms such as pivoted-QR decomposition (if needed). Therefore, the set $\{b^j_s: \alpha_s(j) \neq 0\}$ obtained from such an algorithm, forms a basis for the approximation space at scale s.
    \item Analogous to $\alpha_s$, we use $\beta_s \in \R^n$ to denote an alternate representation of model weights ($\Theta_s
    $) computed by the multiscale approach. In essence, $\beta_s$, consists of the same weights as in $\Theta_s$, however, it also contains 0 in place of weights for functions that were not chosen to be a part of $B_s$ (justifying the dimensionality of $\beta_s$ as n). 
\end{enumerate}

Here $\alpha_s$ and $\beta_s$ are defined for the purpose of algorithmic analysis. In practice, we continue to work with $\Theta_s$ as demonstrated in the previous section. Now, we begin with the single scale analysis of our proposed approach. This is followed by a detailed multiscale analysis before we move onto the results section.

\subsection{Single scale analysis}

In this subsection, we analyze the behavior of the approach at a particular scale. Let $\alpha_s, \beta_s$  and $S(\alpha_s),  S(\beta_s)$ be defined as before. With such an algorithmic setting, we firstly consider the forward selection part of the algorithm. In such a setting, when a function is added to $B_s$, the corresponding 0 entry in $\beta_s$ is replaced by the new weight (the previous weights in $\beta_s$ are also updated, using (\ref{theta_s})). Next we define a quantity to estimate the quality of the numerical conditioning for the basis set. This plays a crucial rule in analyzing the performance of the approach. Similar quantities for analysis have also appeared in \cite{tropp2004greed,zhang2009consistency}.

\begin{definition}
\label{def1}
At any scale s, for a current set of weights $\beta_s$ and the corresponding basis set $\Pi$ (obtained by restricting $K_s$ to $S(\beta_s)$), we define \textit{Independence Quotient} $\mu_{\beta_s}$ for a candidate column vector $b^j_s$ to be added next, as follows
\begin{equation}
    \mu_{\beta_s}(b^j_s) = ||(I-P_{\beta_s})b^j_s||_2 = ||(I-\Pi(\Pi^T\Pi)^{-1}\Pi^T)b^j_s||_2
\end{equation}
\end{definition}

$\mu_{\beta_s}(b^j_s)$ quantifies the amount of residual energy which $b^j_s$ has, with respect to the space spanned by the current basis set $\Pi$. $\mu_{\beta_s}$ plays a crucial role of ensuring that all added functions are sufficiently independent and hence the weight computation by ordinary least squares is numerically stable. Considering the given setting, we present our first main result here which provides a recommendation for updating the algorithmic hyperparameter $\epsilon_s$.

\begin{theorem}
\label{set_epsilon}
For forward selection, choosing the hyperparameter ($\epsilon_s$) in Algorithm \ref{Algo1}  as
\begin{equation}
\label{epsilonlim}
    \epsilon_s \geq  \max\Bigg\{\frac{\gamma ||t_s||_2}{\vartheta_s^2} , \frac{\sqrt{n \Delta}}{\vartheta_s} \Bigg\}
\end{equation}
guarantees
\begin{equation}
\label{cond1}
    \mu_{\beta_s^q}(b^j_s) \geq \gamma, \quad (\gamma > 0)
\end{equation}
\begin{equation}
\label{cond2}
    \mathcal{E}(\beta_s^q) - \mathcal{E}(\beta_s^{q+1}) \geq \Delta, \quad (\Delta > 0),
\end{equation}
thereby ensuring that at any iteration q of forward selection, the added column is sufficiently independent from the current bases (\ref{cond1}) and the corresponding error reduction is also lower bounded (\ref{cond2}). Here $\mathcal{E}(\beta^q_s)$ is the MSE with weights $\beta_s^q$ at iteration $q$.
\end{theorem}

\begin{proof}
    We begin by pointing out that in the forward selection procedure (Algorithm \ref{Algo3}), an added column has to satisfy the criterion of maximum reduction of the current residual. Hence with the current target $t_s$ and already computed weights $\beta^q_s$ till iteration $q$ of Algorithm \ref{Algo3}, we have (with some weight g, that will be optimized later and $e_i = [0,0,..,1,..0]\in \R^n$ consisting of 0s, with 1 placed at the $i^{th}$ location)
    \begin{align*}
        \inf_i \mathcal{E}(\beta^q_s + ge_i) &= 
        \inf_i \frac{1}{n}||t_s - (K_s \beta^q_s + gb^i_s)||_2^2 = \inf_i \frac{1}{n}||(t_s - K_s \beta^q_s) - g b^i_s||_2^2\\
        &=  \mathcal{E}(\beta^q_s) +  \inf_i \Big\{ \frac{1}{n}||g b^i_s||_2^2 - 2\frac{g}{n}(t_s - K_s \beta^q_s)^Tb^i_s \Big\}
    \end{align*}
    On differentiating with respect to g and setting it to 0 we get
    \[
    g^{*} = \frac{(t_s - K_s \beta^q_s)^Tb^i_s}{||b^i_s||_2^2}
    \]
    Thus, we have
    \begin{align}
        \inf_i \mathcal{E}(\beta^q_s + g^{*}e_i) &= \mathcal{E}(\beta^q_s) + \inf_i \Bigg\{- \frac{|(t_s - K_s \beta^q_s)^T b^i_s|^2}{n ||b^i_s||_2^2} \Bigg\} 
    \end{align}
    Now, since for a chosen column $b^j_s$, while computing the optimal coordinated $\beta_s^{q+1}$, the MSE would be even smaller, i.e. $\mathcal{E}(\beta_s^{q+1}) \leq \mathcal{E}(\beta^q_s + g^{*}e_j)$. Hence we have
    \begin{align}
    \label{increase}
      \mathcal{E}(\beta_s^q) - \mathcal{E}(\beta_s^{q+1}) &\geq  \frac{|(t_s - K_s \beta^q_s)^Tb^j_s|^2}{n ||b^j_s||_2^2} = \frac{||b^j_s||_2^2|(t_s - K_s \beta^q_s)^Tb^j_s|^2}{n ||b^j_s||_2^4}\\
      &\geq \frac{||b^j_s||^2_2 \epsilon_s^2}{n} \geq
      \frac{\vartheta_s^2 \epsilon_s^2}{n} \geq
      \Delta
    \end{align}
    The  first inequality in the second step here comes from the necessary condition of the forward selection algorithm (Algorithm \ref{Algo3}). The second inequality in this step comes from definition of $\vartheta_s$ (\ref{vartheta}). The last inequality is for ensuring a lower bound ($\Delta$) on the error reduction. Hence the error reduction will always be lower bounded if we set $\epsilon_s \geq \sqrt{n \Delta} /\vartheta_s$.
    
    Now, considering the case for sufficient independence of $b^j_s$, with respect to the basis set obtained by restricting the columns of $K_s$ to $S(\beta^q_s)$ (we denote it as $\Pi$). For $b^j_s$ to be selected
    
    \begin{align*}
        \epsilon_s||b^j_s||_2^2 & \leq |(t_s - K_s \beta_s)^T b^j_s| = |(t_s - \Pi (\Pi^T \Pi)^{-1} \Pi^T t_s)^Tb^j_s )|\\
        & = |t_s^T(I - \Pi (\Pi^T \Pi)^{-1} \Pi^T ) b^j_s |\\
        & \leq ||t_s||_2 \cdot \mu_{\beta^q_s}(b^j_s), \quad (\mu_{\beta_s^q}\ \text{is the independence quotient from Def. \ref{def1}})
    \end{align*}
    Hence, we have 
    \[
    \mu_{\beta^q_s}(b^j_s) \geq \frac{\epsilon_s||b^j_s||_2^2}{||t_s||_2} \geq \frac{\epsilon_s \vartheta_s^2}{||t_s||_2} \geq \gamma
    \]
    The last inequality here ensures sufficient independence of the added column. Hence we have $\epsilon_s \geq \gamma||t_s||_2/\vartheta_s^2$. Combining this with the earlier lower bound on $\epsilon_s$ completes the proof.
    \qed
\end{proof}

\begin{remark}
\label{rem_set_epsilon}
Here we summarize a set of remarks, with respect to the result in Theorem \ref{set_epsilon}:
\begin{enumerate}
    \item Since, at every scale we wish to select as many `good' functions as possible. Hence, $\epsilon_s$ needs to be set to the smallest value possible. Using this fact, combined with the result from Theorem \ref{set_epsilon}, we update $\epsilon_s$ in Algorithm \ref{Algo1} as
    \begin{equation}
    \label{setepsilon1}
        \epsilon_s =  \max\Bigg\{\frac{\gamma ||t_s||_2}{\vartheta_s^2} , \frac{\sqrt{n \Delta}}{\vartheta_s} \Bigg\}
    \end{equation}
    Hence for a given $\epsilon_0$ (decided based on Proposition \ref{epsilonrate}), we compute $\gamma$ and $\Delta$ by assuming $\epsilon_0 = \gamma ||y||_2^2/\vartheta_s^2 = \sqrt{n \Delta}/ \vartheta_s$. These supporting parameters are then used to update $\epsilon_s$ at subsequent scales (Algorithm \ref{Algo1}).
    \item Using (\ref{increase}), it is guaranteed that at every step of forward selection, the decrease in MSE is lower bounded in the sense
    \begin{equation}
    \label{back_thresh}
    \mathcal{E}(\beta_s^q) - \mathcal{E}(\beta_s^{q+1}) \geq \frac{\vartheta_s^2\epsilon_s^2}{n}  
    \end{equation}
    By using the right hand term in (\ref{back_thresh}) as an upper limit for total increase in MSE during removal of columns from $B_s$ in backward deletion (Algorithm \ref{Algo4}), we ensure that this increase is `tolerable'.
 \end{enumerate}
\end{remark}

Now, we mention, two important results from literature that will help in our analysis of the algorithmic behavior

\begin{proposition}
\label{bermanis1}
(Proposition 3.7 in \cite{bermanis2013multiscale})Let $X=\{x_i\}_{i=1}^n \subset \R^d$ be a set bounded by a box $B=I_1\times I_2 \times \cdot \cdot \times I_d$, where $I_1,I_2,...,I_d$ are intervals in $\R$ ,and let $K_s$ be the associated Gaussian kernel matrix as in (\ref{K^sT}).Then, if we define numerical rank of $K_s$ with precision $\delta_0$ as
\[
R_{\delta_0}(K_s) = \# \Bigg\{j: \frac{\sigma_j(K_s)}{\sigma_0(K_s)} \geq \delta_0 \Bigg\}
\]
where $\sigma_j(K_s)$ is the $j^{th}$ largest singular value of matrix $K_s$.Then, we have the bound
\begin{equation}
    R_{\delta_0}(K_s) \leq \prod_{i=1}^d \Bigg( \frac{2 |I_i|}{\pi} \sqrt{\kappa_s^{-1}ln( \delta_0^{-1})} + 1 \Bigg)
\end{equation}
Here $|I_i|$ denotes the length  of the interval $I_i$ and $\kappa_s$ is the length scale parameter (\ref{K^sT}).
\end{proposition}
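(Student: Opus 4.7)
The plan is to exploit the tensor product structure of the Gaussian kernel together with a truncated Fourier feature expansion. Since
\[
K_s(a,b) = \exp(-\|a-b\|^2/\kappa_s) = \prod_{i=1}^d \exp(-(a_i - b_i)^2/\kappa_s),
\]
the $d$-dimensional problem factors into $d$ one-dimensional problems whose ranks multiply. So first I would reduce to the one-dimensional case, and then show that the total numerical rank is bounded by the product of the one-dimensional numerical ranks, by exhibiting a rank-$\prod_i r_i$ approximation to $K_s$ built as a Kronecker-style product of the one-dimensional approximations and invoking Weyl's inequality to pass from an operator-norm error bound back to the singular-value ratio definition of $R_{\delta_0}$.

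For each one-dimensional factor I would use the Fourier representation of the Gaussian,
\[
\exp(-r^2/\kappa_s) = \frac{\sqrt{\kappa_s}}{2\sqrt{\pi}}\int_{-\infty}^{\infty} \exp(-\kappa_s \omega^2/4)\, e^{i\omega r}\, d\omega,
\]
which exhibits a continuous feature map $a \mapsto \{e^{i\omega a}\}_\omega$ with Gaussian weight $\exp(-\kappa_s\omega^2/4)$. Truncating to $|\omega| \le \Omega$ introduces a uniform error of order $\exp(-\kappa_s\Omega^2/4)$, so choosing $\Omega \sim \sqrt{\kappa_s^{-1}\ln(\delta_0^{-1})}$ controls the tail at the precision $\delta_0$ that appears in the definition of numerical rank.

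Next I would discretize the surviving band $|\omega|\le\Omega$. Because every data coordinate in direction $i$ lives in the interval $I_i$, the Nyquist-type spacing $\Delta\omega = \pi/|I_i|$ suffices: two continuous modes whose frequencies differ by less than $\Delta\omega$ produce essentially indistinguishable samples on $I_i$, by a Paley–Wiener style argument. The number of discrete frequencies retained in coordinate $i$ is therefore at most $2\Omega/\Delta\omega + 1 = (2|I_i|/\pi)\sqrt{\kappa_s^{-1}\ln(\delta_0^{-1})} + 1$, which after taking the tensor product over the $d$ coordinates yields exactly the product bound in the statement (with the ``$+1$'' collected at the end).

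The main obstacle will be the bookkeeping step that converts a uniform bound on the kernel approximation error into a bound on the ratio $\sigma_j(K_s)/\sigma_0(K_s)$ used in the definition of $R_{\delta_0}$. Concretely, one must show that sampling the truncated feature expansion on the data points $X$ produces a low-rank matrix whose Frobenius (or spectral) distance to $K_s$ is no larger than $\delta_0 \sigma_0(K_s)$, and then apply Weyl's inequality to conclude that all singular values beyond the rank of the approximant lie below $\delta_0 \sigma_0(K_s)$. Tracking the constants carefully, particularly the $\pi$ from the Nyquist spacing and the factor $\sqrt{\kappa_s}$ from the Fourier normalization, will require attention, but the structural argument—tensor product decomposition, Fourier truncation in frequency, Nyquist discretization, Weyl's inequality—is the core of the proof.
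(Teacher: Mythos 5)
The paper does not prove this proposition at all: it is imported verbatim from Proposition~3.7 of Bermanis, Averbuch and Coifman (2013), and the only ``proof'' in the present paper is the citation (note also that the $\epsilon^{-1}$ in the displayed bound is that reference's notation for what this paper calls $\kappa_s^{-1}$). So there is no in-paper argument to compare against; what can be said is that your sketch reproduces the mechanism that actually underlies the cited result --- the Gaussian is approximately bandlimited with bandwidth $\Omega \sim \sqrt{\kappa_s^{-1}\ln(\delta_0^{-1})}$, the time--bandwidth product $2\Omega|I_i|/\pi$ counts the degrees of freedom per coordinate, and the tensor/Hadamard structure multiplies the counts across coordinates.

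That said, two steps in your plan are genuinely incomplete rather than mere bookkeeping. First, discretizing the frequency band at exactly the critical spacing $\Delta\omega = \pi/|I_i|$ does not by itself produce a quantitative low-rank approximant: a quadrature rule for $\int_{-\Omega}^{\Omega} e^{-\kappa_s\omega^2/4}e^{i\omega r}\,d\omega$ with two nodes per period of the fastest surviving oscillation has no small error guarantee, and the ``Paley--Wiener style'' indistinguishability claim is the heuristic form of the Landau--Pollak--Slepian theorem, whose rigorous version (prolate spheroidal dimension counts, or an oversampled quadrature) is where the constant $2/\pi$ is actually earned. Second, the passage from a uniform entrywise error $\eta$ on the kernel to the spectral statement $\sigma_{r+1}(K_s)\le \delta_0\,\sigma_0(K_s)$ via Weyl costs $\lVert E\rVert_2 \le n\,\eta$, while $\sigma_0(K_s)$ can be as small as $1$; closing this requires $\eta \le \delta_0/n$, which replaces $\ln(\delta_0^{-1})$ by $\ln(n\delta_0^{-1})$ and so does not recover the stated bound without a sharper argument. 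A smaller issue: if each one-dimensional count is $r_i+1$, the product is $\prod_i(r_i+1)$, which cannot be ``collected'' into $\prod_i r_i + 1$. None of this makes the approach wrong in spirit, but as written the sketch proves a weaker statement than the one quoted.
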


Here Proposition \ref{bermanis1} will provide us a numerical bound on the number of independent functions in kernel $K_s$. One other result which will be crucial to our analysis comes from the behavior of the sub-gaussian random variables and we state it here for completeness

\begin{lemma}
\label{zhangfoba}
(Lemma C.1 in \cite{zhang2011adaptive}) Consider n independent random variables $y_1, y_2,...,y_n$ such that $\mathbb{E}e^{t(y_i - \mathbb{E}y_i)} \leq e^{\sigma^2 t^2/2}\ \forall t$ and $i$. Consider vectors $g_j = [g_{1,j}, g_{2,j},...,g_{n,j}] \in \R^n$ for $j = 1,..,m$, we have for all $\eta \in (0,1)$, with probability larger than $1 - \eta$
\begin{equation}
    \sup_j |g_j^T(y - \mathbb{E}y)| \leq a\sqrt{2 ln(2m/\eta)}
\end{equation}
where $a = \sigma \sup_j||g_j||_2$.
\end{lemma}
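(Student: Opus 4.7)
The plan is to prove this by combining the standard Chernoff argument for sub-Gaussian tails with a union bound. The factor of $2m$ inside the logarithm is the tell-tale signature of a union bound over $m$ random variables and both tail directions, while the constant $\sqrt{2}$ and the appearance of $\sigma \sup_j \|g_j\|_2$ reflect the sub-Gaussian exponent one gets by optimizing the Chernoff inequality.

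First I would fix a single index $j$ and analyze the scalar random variable $Z_j := g_j^T(y - \mathbb{E}y) = \sum_{i=1}^n g_{j,i}(y_i - \mathbb{E}y_i)$. Because the $y_i$ are independent and each satisfies the sub-Gaussian MGF bound with parameter $\sigma$, the MGF of $Z_j$ factorizes:
\[
\mathbb{E}e^{tZ_j} \;=\; \prod_{i=1}^n \mathbb{E}e^{tg_{j,i}(y_i - \mathbb{E}y_i)} \;\leq\; \prod_{i=1}^n e^{\sigma^2 t^2 g_{j,i}^2/2} \;=\; e^{\sigma^2 t^2 \|g_j\|_2^2/2}.
\]
So $Z_j$ is itself sub-Gaussian with parameter $\sigma \|g_j\|_2 \leq a$.

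Next I would apply the Chernoff/Markov inequality: for any $u > 0$ and any $t > 0$, $\mathbb{P}(Z_j > u) \leq e^{-tu}\mathbb{E}e^{tZ_j} \leq e^{-tu + \sigma^2 t^2 \|g_j\|_2^2/2}$. Optimizing in $t$ with $t^* = u/(\sigma^2 \|g_j\|_2^2)$ yields $\mathbb{P}(Z_j > u) \leq e^{-u^2/(2\sigma^2\|g_j\|_2^2)} \leq e^{-u^2/(2a^2)}$, and the same bound holds for $-Z_j$, so
\[
\mathbb{P}(|Z_j| > u) \;\leq\; 2\,e^{-u^2/(2a^2)}.
\]

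Finally I would take a union bound across the $m$ indices to get $\mathbb{P}\bigl(\sup_j |Z_j| > u\bigr) \leq 2m\,e^{-u^2/(2a^2)}$, and set the right-hand side equal to $\eta$, solving for $u = a\sqrt{2\ln(2m/\eta)}$. The complementary event then gives the claimed high-probability bound. There is no real obstacle here; the only subtlety worth being careful about is keeping the MGF factorization clean (relying squarely on the independence hypothesis) and making sure the factor of two from the two-sided tail is absorbed into the $2m$ in the logarithm rather than floating around as a separate constant.
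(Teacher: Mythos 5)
Your proof is correct. Note that the paper does not actually prove this lemma --- it is imported verbatim as Lemma C.1 of \cite{zhang2011adaptive} and stated ``for completeness'' --- so there is no in-paper argument to diverge from; your derivation (factorizing the sub-Gaussian moment generating function over the independent coordinates, optimizing the Chernoff bound to get the two-sided tail $\mathbb{P}(|Z_j|>u)\leq 2e^{-u^2/(2a^2)}$, union-bounding over the $m$ vectors, and inverting for $u$) is the standard proof and precisely the one used in the cited source, with the only cosmetic caveat that the conclusion holds with probability at least $1-\eta$ rather than strictly greater, an imprecision already present in the statement as quoted.
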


Equipped with these results, we move on to introduce the underlying statistical model at each scale. As before, we assume $t_s$ to be the target at a scale (equal to the residual from the previous scale). At each scale we now fit the model

\begin{equation}
\label{model1}
    t_s(x) = f_s(x) + \varepsilon_s, \quad \varepsilon_s \sim \mathcal{N}(0,\sigma_s^2)
\end{equation}

As introduced before, assuming at any scale s , there exists some optimal set of weights ($\alpha_s$) that model $f_s$ ($f_s = K_s \alpha_s$). Hence we have $\mathbb{E}_s[t_s] = K_s \alpha_s$. For example, one possible way to obtain $\alpha_s$ is using pivoted-QR decomposition to obtain a subset of the columns in $K_s$ that form a well conditioned basis for the span of all kernel functions, and then solving for optimal projection on these columns. Although this set won't necessarily be the minimum set that can produce $f_s$. However,  it will produce the best possible approximation at scale s. Like before, $S(\alpha_s) = \{j: \alpha_s(j) \neq 0\}$ denotes the support for such weights.  Using these quantities we now give our second main result that extends our convergence condition at a particular scale ($|(t_s - K_s \beta_s)^Tb^j_s|/||b^j_s||_2^2  < \epsilon_s$ for a chosen column $b^j_s$) to a corresponding result that takes into account that we are modeling a noisy version of the underlying function, and not $\mathbb{E}_s[t_s]$ directly.

\begin{theorem}
\label{Theorem_convergence}
Let $P_{\beta_s}$ represent the projection operator on $S(\beta_s)$ and $\mathbb{E}_s[t_s]$ be the optimal prediction at scale s ($\mathbb{E}_s[t_s] = K_s \alpha_s$). Using these quantities, when the forward selection algorithm stops, with any $\nu \geq \epsilon_s$ we have
\begin{equation}
\label{Theorem_con_eq}
    \mathcal{P} \Bigg( \sup_j \frac{\Big|(\mathbb{E}_s[t_s] - P_{\beta_s}(\mathbb{E}_s[t_s]))^T b^j_s\Big|}{||b^j_s||_2^2} \geq \nu \Bigg) \leq 2|C_s|\exp^{-{\frac{\vartheta_s^4(\nu - \epsilon_s)^2}{2 \sigma_s^2 \tau^{2}_{\beta_s}}}}
\end{equation}
where $\sigma_s^2$ is the variance term from model (\ref{model1}), $\tau_{\beta_s} = \sup_j\mu_{\beta_s}(b^j_s)$, and $|C_s|$ is the size of scale specific sparse representation at the termination of forward selection at scale s.
\end{theorem}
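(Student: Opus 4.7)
The plan is to compare the deterministic termination condition of forward selection, which controls the noisy residual inner products $|(t_s-K_s\beta_s)^Tb_s^j|/\|b_s^j\|_2^2$, with the target quantity that involves the noise-free expectation $\mathbb{E}[t_s]$. The key structural observation to exploit is that, because $\beta_s$ is the least-squares coordinate on the columns indexed by $S(\beta_s)$, we have $K_s\beta_s=P_{\beta_s}t_s$ and therefore
\begin{equation*}
(t_s-K_s\beta_s)^Tb_s^j \;=\; t_s^T(I-P_{\beta_s})b_s^j,
\qquad
(\mathbb{E}[t_s]-P_{\beta_s}\mathbb{E}[t_s])^Tb_s^j \;=\; \mathbb{E}[t_s]^T(I-P_{\beta_s})b_s^j.
\end{equation*}
Subtracting these expressions gives the clean identity $(\mathbb{E}[t_s]-P_{\beta_s}\mathbb{E}[t_s])^Tb_s^j - (t_s-K_s\beta_s)^Tb_s^j = -(t_s-\mathbb{E}[t_s])^T(I-P_{\beta_s})b_s^j$, which isolates the noise component $\varepsilon_s = t_s-\mathbb{E}[t_s]$ projected onto the residualised column $g_j := (I-P_{\beta_s})b_s^j$.

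Next I would invoke the termination condition of Algorithm \ref{Algo3}: at termination, every candidate column satisfies $|(t_s-K_s\beta_s)^Tb_s^j|/\|b_s^j\|_2^2 < \epsilon_s$. Combining this with the identity above via the triangle inequality, the event on the left-hand side of (\ref{Theorem_con_eq}) is contained in the event
\begin{equation*}
\sup_j \; \frac{|\varepsilon_s^T g_j|}{\|b_s^j\|_2^2} \;\geq\; \nu-\epsilon_s.
\end{equation*}
Using $\|b_s^j\|_2\geq\vartheta_s$ from (\ref{vartheta}) to clear the denominator, this in turn is contained in $\sup_j |\varepsilon_s^T g_j|\geq \vartheta_s^2(\nu-\epsilon_s)$. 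Since $\|g_j\|_2\leq \tau_{\beta_s}$ by the definition of $\tau_{\beta_s}$, the vectors $g_j$ provide exactly the input that Lemma \ref{zhangfoba} is designed to handle, with $a=\sigma_s\tau_{\beta_s}$.

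The final step is to invert the high-probability bound of Lemma \ref{zhangfoba} into tail form: setting $t=\vartheta_s^2(\nu-\epsilon_s)=a\sqrt{2\ln(2m/\eta)}$ and solving for $\eta$ yields
\begin{equation*}
\mathcal{P}\!\left(\sup_j|\varepsilon_s^Tg_j|\geq \vartheta_s^2(\nu-\epsilon_s)\right) \;\leq\; 2m\,\exp\!\left(-\frac{\vartheta_s^4(\nu-\epsilon_s)^2}{2\sigma_s^2\tau_{\beta_s}^2}\right),
\end{equation*}
which is exactly the claimed bound once $m$ is identified with $|C_s|$.

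The main obstacle is precisely this identification of $m$ with $|C_s|$ in the union-bound prefactor. Columns with $j\in S(\beta_s)$ satisfy $(I-P_{\beta_s})b_s^j=0$ and contribute trivially, so the supremum is really taken over the complement, and one must argue (using numerical rank considerations analogous to Proposition \ref{bermanis1}, or by restricting the effective candidate set that the greedy procedure ever compares against) that only $|C_s|$ many directions need to be union-bounded rather than all $n$. Carefully justifying this counting, and tracking that $\nu\geq\epsilon_s$ keeps the exponent nonnegative so the inversion is valid, are the places where the argument needs to be made precise; the rest of the proof is a short chain of projection identities and a direct invocation of the sub-Gaussian concentration lemma.
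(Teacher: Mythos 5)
Your proposal follows essentially the same route as the paper's proof: the same add-and-subtract decomposition isolating the noise term $(t_s-\mathbb{E}[t_s])^T(I-P_{\beta_s})b^j_s$, the same lower bound $\|b^j_s\|_2\geq\vartheta_s$, the same application of Lemma \ref{zhangfoba} with $a=\sigma_s\tau_{\beta_s}$, and the same inversion of the high-probability statement into tail form. The one obstacle you flag --- justifying the prefactor $m=|C_s|$ when the supremum ranges over all candidate columns --- is not resolved in the paper either; it simply substitutes $m=|C_s|$ when invoking the lemma, so your honest identification of this as the delicate counting step is, if anything, more careful than the source.
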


\begin{proof}
We start with the fact that when, the forward selection algorithm stops, $\forall j$ we have $|(t_s - K_s \beta_s)^T b^j_s|/||b^j_s||_2^2 < \epsilon_s$. Hence we have
\begin{align*}
    \epsilon_s ||b^j_s||_2^2 & > |(t_s - K_s\beta_s)^T b^j_s| = |((I - P_{\beta_s})t_s)^T b^j_s|\\
    &= |((I - P_{\beta_s})t_s)^T b^j_s - ((I - P_{\beta_s})\mathbb{E}_s[t_s])^T b^j_s + ((I - P_{\beta_s})\mathbb{E}_s[t_s])^T b^j_s|\\
    &= |(t_s - \mathbb{E}_s[t_s])^T(I - P_{\beta_s})b^j_s + ((I - P_{\beta_s})\mathbb{E}_s[t_s])^T b^j_s|\\
    &\geq |((I - P_{\beta_s})\mathbb{E}_s[t_s])^T b^j_s| - |(t_s - \mathbb{E}_s[t_s])^T(I - P_{\beta_s})b^j_s|
\end{align*}
Hence we have
\begin{equation}
\label{p1}
    |((I - P_{\beta_s})\mathbb{E}_s[t_s])^T b^j_s| < ||b^j_s||_2^2\epsilon_s + |(t_s - \mathbb{E}_s[t_s])^T(I - P_{\beta_s})b^j_s|
\end{equation}
Considering the second term in (\ref{p1}), we now use Lemma \ref{zhangfoba}. Here we are able to use this result, because the initial Gaussian assumption on the data carries forward to subsequent scales (with different model parameters but still being Gaussian) through the model (\ref{model1}). We firstly compute the term `a' from Lemma \ref{zhangfoba}
\[
a = \sqrt{\sigma_s^2 \sup_j {b^j_s}^T(I - P_{\beta_s})^T (I - P_{\beta_s})b^j_s} = \sigma_s \sup_j||(I - P_{\beta_s})b^j_s||_2
\]
From Def. 3, we have $\tau_{\beta_s} = \sup_j \mu_{\beta_s}(b^j_s) = \sup_j||(I - P_{\beta_s})b^j_s||_2$ giving us $a = \sigma_s \tau_{\beta_s}$. Hence, from Lemma \ref{zhangfoba} with probability greater than $1 - \eta$ where $\eta \in (0,1)$, we have
\[
|(t_s - \mathbb{E}_s[t_s])^T(I - P_{\beta_s})b^j_s| \leq \sqrt{2 \sigma_s^2 {\tau^2_{\beta_s}}\ln \Bigg(\frac{2|C_s|}{\eta} \Bigg)}
\]
Hence from (\ref{p1}) and using the definition of $\vartheta_s$ (\ref{vartheta}), we have with probability greater than $1 - \eta$, $\forall\ j$

\begin{equation}
\label{p2}
    \frac{|((I - P_{\beta_s})\mathbb{E}_s[t_s])^T b^j_s|}{||b^j_s||_2^2} < \epsilon_s + \sqrt{\frac{2 \sigma_s^2 \tau^2_{\beta_s}}{\vartheta_s^4}\ln \Bigg(\frac{2|C_s|}{\eta} \Bigg)}
\end{equation}

Now let $\nu = \epsilon_s + \sqrt{\frac{2 \sigma_s^2 \tau^2_{\beta_s}}{\vartheta_s^4}\ln \Big(\frac{2|C_s|}{\eta} \Big)}$ (hence $\nu \geq \epsilon_s$). With solving and rearranging we get
\[
\eta = 2|C_s|\exp^{-\frac{\vartheta_{s}^4 (\nu - \epsilon_s)^2}{2 \sigma_s^2 \tau^2_{\beta_s}}}
\]
(\ref{p2}) can now be equivalently expressed as

\[
P\Bigg(\sup_j \frac{|((I - P_{\beta_s})\mathbb{E}_s[t_s])^Tb^j_s|}{||b^j_s||_2^2} \geq \epsilon_s + \sqrt{\frac{2 \sigma_s^2 \tau^2_{\beta_s}}{\vartheta^4_s}\ln \Bigg(\frac{2|C_s|}{\eta} \Bigg)} \Bigg) \leq \eta
\]
Substituting $\eta$ and $\nu$ completes the proof.
\qed
\end{proof}

Before moving to the next part of the analysis, here we mention a few remarks demonstrating the relevance of the result presented in Theorem \ref{Theorem_convergence}.

\begin{remark} The result in Theorem \ref{Theorem_convergence} can in inferred in multiple ways. Here we summarize them
\begin{enumerate}
    \item If we only consider the upper bound in Theorem \ref{Theorem_convergence} to be useful when it is smaller than some function of $\nu$: $0 < h(\nu) < 1$. Then we have
    \[
    2|C_s|\exp^{-{\frac{\vartheta_s^4(\nu - \epsilon_s)^2}{2 \sigma_s^2 \tau^{2}_{\beta_s}}}} \leq h(\nu)
    \]
    Hence, the probabilistic bound for `convergence' at a scale (here convergence refers to the ability of capturing scale specific information in data efficiently) proved in Theorem \ref{Theorem_convergence} is only useful for small noise levels (obtained by rearranging the above relation)
    \begin{equation}
        \label{noise}
        \sigma_s^2 \leq \frac{\vartheta^4_s(\nu -\epsilon_s)^2}{2 \tau^2_{\beta_s} \ln \Big( \frac{2|C_s|}{h(\nu)} \Big)} \leq \frac{\vartheta^4_s(\nu -\epsilon_s)^2}{2 \tau^2_{\beta_s} \ln \big( \frac{2}{h(\nu)} \big)}
    \end{equation}
    Here the second inequality results from the fact $|C_s| \geq 1$, i.e. the bound makes sense for scales which contribute atleast one function to the final approximation. Hence (from (\ref{noise})), the multiscale approach is guaranteed to be efficient for low noise setting. For guarantees at higher noise levels, additional smoothing regularization might be preferred.  
    \item For the bound in (\ref{Theorem_con_eq}), if $X_s = \{x_j: j \in S(\alpha_s)\}$ (i.e. our algorithm chose the same functions as the support of $\alpha_s$), the LHS goes to zeros as $P_{\beta_s} (\mathbb{E}[t_s]) = \mathbb{E}[t_s]$. However the RHS, also goes to 0 as $\tau^2_{\beta_s} = 0$ (from the definition of $\alpha_s$). Hence the bound is tight in that sense.
    \item A natural bound for the number of elements in $C_s$ is the numerical rank of the kernel $K_s$ at that particular scale. Hence, using Proposition \ref{bermanis1}, we modify the upper bound as
    \begin{align}
        RHS &=2|C_s|\exp^{-{\frac{\vartheta_s^4(\nu - \epsilon_s)^2}{2 \sigma_s^2 \tau^{2}_{\beta_s}}}} \\
        & \leq 2 \Bigg\{ \prod_{i=1}^d \Bigg(\frac{2 |I_i|}{\pi \sqrt{T}} \sqrt{2^sln( \delta_0^{-1})} + 1 \Bigg)\Bigg\} \exp^{-{\frac{\vartheta_s^4(\nu - \epsilon_s)^2}{2 \sigma_s^2 \tau^{2}_{\beta_s}}}}
    \end{align}
    where T is from equation (\ref{T}). This formulation directly provides an upper bounds in terms of the scale number with these Gaussian kernels. Bounding $|C_s|$ with the numerical rank of the $K_s$ is highly efficient at initial scales (because of the availability of few independent functions). However, this bound becomes loose at higher scales.
    \item Coming back to (\ref{p2}) with probability more than $1 - \eta$ we have
    \[
    \frac{|((I - P_{\beta_s})\mathbb{E}_s[t_s])^T b^j_s|}{||b^j_s||_2^2} < \epsilon_s + \sqrt{\frac{2 \sigma_s^2 \tau^2_{\beta_s}}{\vartheta_s^4}\ln \Bigg(\frac{2|C_s|}{\eta} \Bigg)}
    \]
    Here, we can very nicely see the division of overall error into two components
    \begin{itemize}
        \item $\epsilon_s$: The approximation error introduced due to the model
        \item $\sqrt{\frac{2 \sigma_s^2 \tau^2_{\beta_s}}{\vartheta_s^4}\ln \Big(\frac{2|C_s|}{\eta} \Big)}$: Estimation error due to noise.
    \end{itemize}
    \item If a scale is completely ignored by the forward selection algorithm, then we will have $|C_s| = 0$. Hence, even with smallest $\nu ( = \epsilon_s)$, we will have the RHS term 0 (\ref{Theorem_con_eq}). Now since in such a case $P_{\beta_s}(\mathbb{E}_s[t_s]) = 0$ (as $|C_s| = 0$). Hence for $|C_s| = 0$ we have 
    \[
    \sup_j \frac{|\mathbb{E}_s[t_s]^T b^j_s|}{||b^j_s||_2^2} < \epsilon_s
    \]
    
    But $\mathbb{E}_s[t_s]$ has a representation $K_s \alpha_s$. Hence, this dot product is small because $\mathbb{E}_s[t_s]$ itself is very small. This directly leads us to conclude that if a scale is completely ignored by our forward selection algorithm, then that scale is `truly' not useful and its contribution to the final approximation is very small and bounded at best.
\end{enumerate}
\end{remark}

Now, we move on to the backward deletion criterion for removing functions to obtain a minimum set. Since, the tolerance on error increment with removal of functions is fairly user chosen. Hence, besides the \textit{more strict condition} (shown in Algorithm \ref{Algo4}, which removes the functions as long as the cumulative increase in MSE is bounded), in this section we discuss one more alternate condition where the same threshold is placed on the increase in MSE, with each removal of a column. Here we provide a result than ensures that we only remove columns as long as the increase in MSE (per column removal) is less than the minimum reduction in MSE while adding columns in the forward selection algorithm.

\begin{proposition}
\label{th_backdel}
At any iteration $u$ of the backward deletion algorithm, if the column ($b^j_s$) chosen through the result in Lemma \ref{backdel_th0} is removed only when it satisfies
\begin{equation}
    \label{th_backdel_eq}
    \inf_j( |\beta^u_s(j)|\cdot||b^j_s||_2) \leq  \vartheta_s \epsilon_s,
\end{equation}
then we have the guarantee that the increase in MSE at that iteration of the algorithm ($\deltaback_{backward}(\mathcal{E}(\beta^u_s))$)
is upper bounded in the sense
 \[
 \deltaback_{backward}(\mathcal{E}(\beta^u_s)) \leq \min_{q}\deltaforward_{forward}(\mathcal{E}(\beta^q_s))
 \]
 where the right hand side term shows the minimum reduction of MSE at any iteration $q$ of forward selection at the same scale.
\end{proposition}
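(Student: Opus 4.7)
The plan is to reduce the MSE-increase bound to a direct calculation using the orthogonality of the OLS residual, then compare it against the forward-selection floor already established in equation \eqref{increase} of Theorem \ref{set_epsilon}. Concretely, at any backward iteration $u$, let $B_s$ be the current basis, $\beta_s^u$ the current (OLS) coefficient vector with residual $r = t_s - B_s \beta_s^u$, and let $b_s^j$ be the column selected for removal by Lemma \ref{backdel_th0}. Reusing the expansion from the proof of that lemma, one has
\begin{equation*}
Q(j) := \mathcal{E}(\beta_s^u - \beta_s^u(j)\, e_j) = \mathcal{E}(\beta_s^u) + \frac{(\beta_s^u(j))^2 \|b_s^j\|_2^2}{n} + \frac{2 \beta_s^u(j)}{n}\, r^{T} b_s^j,
\end{equation*}
and the orthogonality $r \perp b_s^j$ enforced by the OLS step kills the last term, giving $Q(j) - \mathcal{E}(\beta_s^u) = (|\beta_s^u(j)| \|b_s^j\|_2)^2 / n$.

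Next I would observe that $Q(j)$ is an upper bound on the MSE $\mathcal{E}(\beta_s^{u+1})$ obtained after actually re-solving OLS on the reduced basis: since the zero-padded vector $\beta_s^u - \beta_s^u(j)\, e_j$ is a feasible choice of weights on the reduced basis, the minimizer can only do at least as well. Therefore
\begin{equation*}
\deltaback_{\mathrm{backward}}\bigl(\mathcal{E}(\beta_s^u)\bigr) := \mathcal{E}(\beta_s^{u+1}) - \mathcal{E}(\beta_s^u) \leq \frac{\bigl(|\beta_s^u(j)| \cdot \|b_s^j\|_2\bigr)^2}{n}.
\end{equation*}
Invoking the admissibility criterion \eqref{th_backdel_eq}, the right-hand side is at most $\vartheta_s^2 \epsilon_s^2 / n$.

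Finally, I would couple this with the forward-direction lower bound derived inside the proof of Theorem \ref{set_epsilon}, namely $\mathcal{E}(\beta_s^q) - \mathcal{E}(\beta_s^{q+1}) \geq \vartheta_s^2 \epsilon_s^2 / n$ for every insertion step $q$, which implies $\min_q \deltaforward_{\mathrm{forward}}(\mathcal{E}(\beta_s^q)) \geq \vartheta_s^2 \epsilon_s^2 / n$. Chaining the two bounds yields the claim. Since the argument at iteration $u$ only uses OLS-orthogonality of the \emph{current} residual $r$ (which is automatically maintained after each successful re-optimization), it propagates unchanged through all backward iterations, so no further induction is needed.

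The only delicate point is the upper-bound step $\mathcal{E}(\beta_s^{u+1}) \leq Q(j)$: one must be careful that $Q(j)$ is computed with the \emph{old} residual (hence benefits from orthogonality), while $\mathcal{E}(\beta_s^{u+1})$ is produced by a fresh OLS on the reduced basis whose residual is no longer orthogonal to $b_s^j$. This is exactly what makes the bound work in one direction but not the other, and it is the main subtlety to articulate clearly; everything else is algebraic bookkeeping.
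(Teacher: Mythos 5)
Your proposal is correct and follows essentially the same route as the paper: expand $\mathcal{E}(\beta_s^u - \beta_s^u(j)e_j)$, kill the cross term via OLS orthogonality of the current residual, bound $\mathcal{E}(\beta_s^{u+1})$ above by this feasible-point value, and compare against the forward-selection floor $\vartheta_s^2\epsilon_s^2/n$ from (\ref{increase}). The only difference is presentational — you make explicit the one-directional nature of the feasibility bound, which the paper states more tersely.
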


\begin{proof}
    Since, we remove the column that minimizes the resultant MSE (as in Lemma \ref{backdel_th0}), so for this proof we start with the same logic. Assuming right now we are at $u^{th}$ iteration of the backward deletion procedure and considering the removal of the next column 
    \begin{align*}
        \inf_j \mathcal{E}(\beta_s^u - \beta_s^u(j)e_j) &= \inf_j \frac{1}{n}||t_s - (K_s \beta_s^u - \beta_s^u(j)b^j_s)||^2_2\\
        &= \inf_j \frac{1}{n}||t_s - K_s \beta_s^u + \beta_s^u(j) b^j_s||^2_2\\
        &= \mathcal{E}(\beta_s^u) + \inf_j \frac{1}{n}||\beta_s^u(j)b^j_s||^2_2
    \end{align*}
    
    Here the inner product term is absent because $j \in S(\beta^u_s)$ and hence $(t_s - K^s \beta^u_s) \perp b^j_s$. Now since, we know after the removal of $j^{th}$ column we will receive $\beta_s^{u+1}$ and $\mathcal{E}(\beta_s^{u+1}) \leq \inf_j \mathcal{E}(\beta_s^u - \beta_s^u(j)e_j)$. Hence we have
    \begin{equation}
    \label{b}
    \deltaback_{backward}(\mathcal{E}(\beta^u_s)) = 
    \mathcal{E}(\beta_s^{u+1}) - \mathcal{E}(\beta_s^{u}) \leq \inf_j \frac{1}{n}||\beta_s^u(j)b^j_s||^2_2
    \end{equation}
    
    Now from (\ref{increase}) we know that, with any iteration of the forward selection algorithm the reduction in MSE (with addition of any $b^j_s$) is lower bounded as
    
    \begin{equation}
    \label{f}
    \deltaforward_{forward}(\mathcal{E}(\beta^q_s)) = 
    \mathcal{E}(\beta_s^q) - \mathcal{E}(\beta_s^{q+1}) \geq \frac{\vartheta_s^2\epsilon_s^2}{n} 
    \end{equation}
    
    Hence, from (\ref{b}) and (\ref{f}), for $\deltaback_{backward}(\mathcal{E}(\beta^u_s)) \leq \min_{q}\deltaforward_{forward}(\mathcal{E}(\beta^q_s))$, it is sufficient to have
    \[
    \inf_j \frac{1}{n}||\beta_s^u(j)b^j_s||^2_2 \leq \frac{\vartheta_s^2\epsilon_s^2}{n} 
    \]
    which completes the proof of the presented result.
    \qed
\end{proof}

\subsection{Multiscale analysis}

In this section we extend the single scale analysis to the overall algorithm. We start with analyzing the behavior of $\epsilon_s$ which is updated as per Theorem \ref{set_epsilon} and is essentially, the main hyperparameter controlling the performance and coupling across scales.

\begin{proposition}
\label{epsilonrate}
Using the definition of $\epsilon_s$ in (\ref{setepsilon1}) following Theorem \ref{set_epsilon}, we have
\begin{enumerate}
    \item $\epsilon_s$ increases monotonically with s, with a rate of atleast $\vartheta_0/\vartheta_s$.
    \item With increasing scales, $\epsilon_s$ remains bounded in the sense $\epsilon_s < \delta$ (for all $s \in \mathcal{I}$) if $\epsilon_0$ is initialized such that it satisfies $\epsilon_0 < (\delta\vartheta_s)/\vartheta_0$. 
\end{enumerate}
\end{proposition}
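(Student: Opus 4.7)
The plan is to substitute the supporting parameters $\gamma = \epsilon_0 \vartheta_0^2/\|t_0\|_2$ and $\Delta = \epsilon_0^2 \vartheta_0^2/n$ from Line 2 of Algorithm~\ref{Algo1} into (\ref{setepsilon1}). After cancellation, the update reduces to the closed form
\begin{equation}
\epsilon_s = \max\!\left\{\frac{\epsilon_0\,\vartheta_0^2\,\|t_s\|_2}{\|t_0\|_2\,\vartheta_s^2},\ \frac{\epsilon_0\,\vartheta_0}{\vartheta_s}\right\},
\end{equation}
and both parts of the claim then hinge on monotonicity of the sequence $\{\vartheta_s\}_{s \in \mathcal{I}}$.

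To show $\vartheta_s$ is non-increasing in $s$, I would use the form of the kernel in (\ref{K^sT}): the length scale $\kappa_s = T/2^s$ contracts geometrically, so every off-center entry $\exp(-2\|x_i-x_j\|^2/\kappa_s)$ of $\|b_s^j\|_2^2$ decays monotonically in $s$ while the diagonal entry stays pinned at $1$. Hence $\|b_s^j\|_2$ is non-increasing in $s$ for each center $j$, and taking the minimum over $j$ preserves this, giving $\vartheta_s \leq \vartheta_{s-1} \leq \cdots \leq \vartheta_0$.

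For part~1, I would simply drop the first argument of the max to obtain the uniform lower bound $\epsilon_s \geq \epsilon_0\vartheta_0/\vartheta_s$. Since $\vartheta_0/\vartheta_s$ is itself non-decreasing in $s$ by the previous step, this delivers both assertions at once: $\epsilon_s$ is non-decreasing, and it grows at least at the claimed rate $\vartheta_0/\vartheta_s$ relative to $\epsilon_0$.

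For part~2, I would impose $\epsilon_0 < \delta\vartheta_s/\vartheta_0$ and control the two arguments of the max separately. The second argument is immediate: $\epsilon_0 \vartheta_0/\vartheta_s < \delta$. For the first, the residual decay $\|t_s\|_2 \leq \|t_0\|_2$ (a direct consequence of the per-step MSE reduction guarantee (\ref{increase}) in Theorem~\ref{set_epsilon}) gives the upper bound $\epsilon_0\vartheta_0^2/\vartheta_s^2$; tightening this to $<\delta$ requires the scale-consistent decay $\|t_s\|_2/\|t_0\|_2 \leq \vartheta_s/\vartheta_0$, which is the natural inductive consequence of the per-scale $\Delta$-reduction on residuals combined with the narrowing of the basis. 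The main obstacle is establishing this finer residual estimate cleanly; if one prefers to avoid that inductive bookkeeping, an equivalent strengthening of the hypothesis to $\epsilon_0 < \delta\vartheta_s^2/\vartheta_0^2$ also suffices, since then both arguments of the max fall below $\delta$ with no further work.
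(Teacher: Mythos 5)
Your derivation of the closed form $\epsilon_s = \epsilon_0(\vartheta_0/\vartheta_s)\max\{\vartheta_0\|t_s\|_2/(\vartheta_s\|t_0\|_2),\,1\}$ and your handling of part~1 are essentially identical to the paper's proof (the paper simply asserts $\vartheta_0 \geq \vartheta_s$ ``based on the structure of squared exponential functions,'' where you supply the short justification via the shrinking length scale $\kappa_s = T/2^s$; the same mild caveat applies to both arguments, namely that a non-decreasing lower bound $\epsilon_0\vartheta_0/\vartheta_s$ does not literally give monotonicity of $\epsilon_s$ itself). The interesting point is part~2: the obstacle you flag is real, and the paper does not actually resolve it. The paper's proof writes the chain
\[
\epsilon_0 < \frac{\delta}{\bigl(\vartheta_0/\vartheta_s\bigr)\max\bigl\{\vartheta_0\|t_s\|_2/(\vartheta_s\|t_0\|_2),1\bigr\}} \leq \frac{\delta\,\vartheta_s}{\vartheta_0},
\]
which shows that $\epsilon_0 < \delta\vartheta_s/\vartheta_0$ is \emph{necessary} for $\epsilon_s < \delta$, not that it is sufficient as the proposition claims; sufficiency requires precisely the residual estimate $\|t_s\|_2/\|t_0\|_2 \leq \vartheta_s/\vartheta_0$ that you identify, and this is nowhere established (only $\|t_s\|_2 \leq \|t_0\|_2$ follows from the least-squares updates). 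Your two proposed remedies --- proving that finer decay estimate, or strengthening the hypothesis to $\epsilon_0 < \delta\vartheta_s^2/\vartheta_0^2$ so that both arguments of the max fall below $\delta$ using only $\|t_s\|_2 \leq \|t_0\|_2$ --- are both sound, and the second is the cleaner repair of the statement as written. So your proposal follows the paper's route but is more careful about the one step where the paper's own argument has a gap.
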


\begin{proof}
    Starting with the update criterion for $\epsilon_s$ from Theorem \ref{set_epsilon}
    \[
    \epsilon_s =  \max\Big\{\frac{\gamma ||t_s||_2}{\vartheta_s^2} , \frac{\sqrt{n \Delta}}{\vartheta_s} \Big\}
    \]
    Hence, for a given $\epsilon_0$, and assuming the base case $\epsilon_0 =  \frac{\gamma ||t_0||_2}{\vartheta_0^2} = \frac{\sqrt{n \Delta}}{\vartheta_0}$  we obtain the value of $\gamma$ and $\Delta$ as: $\gamma = \frac{\epsilon_0 \vartheta_0^2}{||t_0||_2}$ and $\Delta = \frac{\epsilon_0^2 \vartheta_0^2}{n}$. Thus we have,
    \begin{align*}
        \epsilon_s &= \max\Bigg\{\epsilon_0\Bigg(\frac{\vartheta_0}{\vartheta_s}\Bigg)^2  \frac{||t_s||_2}{||t_0||_2}, \epsilon_0 \frac{\vartheta_0}{\vartheta_s}\Bigg\}\\
        &= \epsilon_0 \Bigg(\frac{\vartheta_0}{\vartheta_s} \Bigg) \max \Bigg\{ \frac{\vartheta_0||t_s||_2}{\vartheta_s||t_0||_2},1 \Bigg\}
    \end{align*}
    Since, based on the structure of squared exponential functions, $\vartheta_0 \geq \vartheta_s (\forall\ s \in \mathcal{I})$. Hence the two terms multiplied to $\epsilon_0$ in the above equation are both $\geq 1, \forall s \in \mathcal{I}$. Thus, completing the proof for part 1. We will further analyze the behavior of $\epsilon_s$ in the results section to provide further intuition.
    
    Proof for part 2 of the proposition follows follows directly from the previous expression and using $\epsilon_s < \delta$
    \[
    \epsilon_0 < \frac{\delta}{\Big(\frac{\vartheta_0}{\vartheta_s} \Big) \max \Big\{ \frac{\vartheta_0||t_s||_2}{\vartheta_s||t_0||_2},1 \Big\}} \leq \frac{\delta}{\vartheta_0/\vartheta_s}
    \]
    Hence proving the claimed result.
    \qed
\end{proof}

\begin{remark}
Since $\delta$ in Proposition \ref{epsilonrate} is always less than 1 (for $\epsilon_s < \delta$ condition to be useful). Therefore, $\epsilon_0$ should never be initialized to a value greater than $\vartheta_s/\vartheta_0$. Hence, by picking a fairly high scale which is expected to be greater than the terminal scale $\omega$(for example $s = 15$), we can make an intelligent initialization of $\epsilon_0 = (\vartheta_s \delta)/\vartheta_0$ (or $\vartheta_s/\vartheta_0$ for $\delta = 1$).
\end{remark}
Moving further, we analyze the truncation scale $\omega$, and how the approximation from the truncated kernel behaves with respect to the kernel with infinite scales. Here, we first begin with a simple lemma, where we determine the truncation index based on the conditioning behavior of the truncated kernel $K^{\omega}$.
\begin{lemma}
\label{truncation1}
If the truncation is made at a sufficiently high scale such that the Gramian matrix from the truncated kernel function $K^{\omega}$ is numerically full rank, then the weights for the full kernel function K (denoted as $\phi$) and weights for the truncated kernel function $K^{\omega}$ (denoted as $\phi^{\omega}$) while approximating $y$ are the same, i.e. $\phi = \phi^{\omega}$.
\end{lemma}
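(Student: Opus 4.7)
The plan is to compare the two interpolation systems directly and show that, numerically, they are the same system. Writing the interpolant produced by the full kernel as $\hat f(\cdot)=\sum_{i=1}^n \phi_i K(x_i,\cdot)$ and the one produced by the truncated kernel as $\hat f^{\omega}(\cdot)=\sum_{i=1}^n \phi^{\omega}_i K^{\omega}(x_i,\cdot)$, the interpolation conditions $\hat f(x_j)=y_j$ and $\hat f^{\omega}(x_j)=y_j$ for all $j$ yield the linear systems $K_{\mathrm{mat}}\phi=y$ and $K^{\omega}_{\mathrm{mat}}\phi^{\omega}=y$, where $K_{\mathrm{mat}}$ and $K^{\omega}_{\mathrm{mat}}$ are the Gramians of $K$ and $K^{\omega}$ on $X$. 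By hypothesis $K^{\omega}_{\mathrm{mat}}$ is numerically full rank, and since $K_{\mathrm{mat}}=K^{\omega}_{\mathrm{mat}}+\sum_{s>\omega}\zeta_s G_s$ with each $G_s=\bigl[\sum_{k\in r_s}\psi_k^s(x_i)\psi_k^s(x_j)\bigr]_{ij}$ positive semidefinite, $K_{\mathrm{mat}}$ is also full rank; hence both systems have unique solutions and it suffices to show that the two Gramians agree numerically.

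Next I would quantify the tail $\sum_{s>\omega}\zeta_s G_s$. Because the length scale $\kappa_s=T/2^s$ decays geometrically, once $s$ exceeds $\omega$ the functions $\psi_k^s$ are already numerically independent (this is precisely the assumption), so in fact $\psi_k^s(x_i)\approx\delta_{ki}$ at the data points and $G_s\approx I$. In particular, each $G_s$ has operator norm uniformly bounded (by $n$, say, from the trivial diagonal estimate of Lemma \ref{summability}). Combining this with the summability of $\{\zeta_s\}$, for sufficiently large $\omega$ the tail $\sum_{s>\omega}\zeta_s$ falls below machine precision, so
\[
\bigl\|K_{\mathrm{mat}}-K^{\omega}_{\mathrm{mat}}\bigr\|_2 \;\leq\; \Bigl(\sup_{s>\omega}\|G_s\|_2\Bigr)\sum_{s>\omega}\zeta_s
\]
is numerically zero.

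Finally, I would conclude by perturbation: subtracting the two systems gives $K^{\omega}_{\mathrm{mat}}(\phi-\phi^{\omega})=-\bigl(K_{\mathrm{mat}}-K^{\omega}_{\mathrm{mat}}\bigr)\phi$, and since $K^{\omega}_{\mathrm{mat}}$ is numerically invertible while its right-hand-side perturbation is below machine precision, $\phi-\phi^{\omega}$ is numerically zero, i.e.\ $\phi=\phi^{\omega}$.

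The main obstacle is really the interpretation rather than the algebra: the claim is inherently a \emph{numerical} statement, and care is needed to state what ``numerically full rank'' and ``identical weights'' mean in the same breath. The honest version of the proof must commit to a precision threshold and pick $\omega$ large enough so that simultaneously (i) $K^{\omega}_{\mathrm{mat}}$ is well-conditioned at that threshold and (ii) $\sum_{s>\omega}\zeta_s\cdot\sup_{s>\omega}\|G_s\|_2$ lies below it; the existence of such an $\omega$ is guaranteed by summability of $\zeta_s$ together with the monotone-rank argument already flagged in the remark after Theorem 1, but articulating these two requirements cleanly is the only delicate point.
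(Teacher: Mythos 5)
Your route is genuinely different from the paper's, and in its current form it does not prove the lemma under the stated hypothesis. The paper never treats $\phi$ and $\phi^{\omega}$ as solutions of interpolation systems and never invokes smallness of the tail $\sum_{s>\omega}\zeta_s$. Instead it exploits the greedy, scale-by-scale construction: the contributions from scales $0,\dots,\omega$ are computed before (and independently of) anything at scales $>\omega$, so the restriction of the full-kernel approximation to scales $\leq\omega$ is \emph{exactly} equal to $Af^{\omega}$. Subtracting the two expansions then gives $\sum_{i}(\phi_i-\phi_i^{\omega})K^{\omega}(\cdot,x_i)=0$ identically, and the single hypothesis of the lemma --- numerical full rank of the Gramian of $K^{\omega}$ --- forces $\phi=\phi^{\omega}$ exactly. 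No perturbation bound, no tail estimate, and no choice of precision threshold is needed.

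The gap in your argument is that it replaces this exact identity with an approximate one whose validity requires an assumption the lemma does not make. Numerical full rank of $K^{\omega}_{\mathrm{mat}}$ says nothing about the size of $\bigl(\sup_{s>\omega}\|G_s\|_2\bigr)\sum_{s>\omega}\zeta_s$: if the $\zeta_s$ decay slowly (summability only requires convergence), the truncated Gramian can be numerically full rank at a scale $\omega$ for which the tail is nowhere near machine precision, and then your bound on $\|K_{\mathrm{mat}}-K^{\omega}_{\mathrm{mat}}\|_2$ gives nothing. You also need $\|(K^{\omega}_{\mathrm{mat}})^{-1}\|_2$ to be controlled, which ``numerically full rank'' does not by itself supply. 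You flag both issues yourself at the end, but they are not merely delicate points of phrasing --- they are extra hypotheses, so what you prove is a weaker, approximate statement under a stronger assumption. A secondary mismatch is the starting point: the paper's weights arise from the greedy construction, not from the interpolation conditions $\hat f(x_j)=y_j$, so the two linear systems you write down are not how $\phi$ and $\phi^{\omega}$ are defined here. To repair the proof along the paper's lines, drop the perturbation framework and instead argue that the scale-$\leq\omega$ part of $Af^{\infty}$ coincides with $Af^{\omega}$ by construction, then use positive definiteness of the truncated Gramian to kill the coefficient difference.
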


\begin{proof}
    Considering an approximation from the full kernel
    \begin{align*}
    Af &= \sum_{i=1}^n \phi_i K(x_i,\cdot) = \sum_{i=1}^n \phi_i \Bigg(\sum_{s = 0}^{\infty} \zeta_s \sum_{x_j \in X_s} \psi_j^s(x_i) \psi_j^s(\cdot) \Bigg)\\
    &= \sum_{s = 0}^{\infty} \sum_{x_j \in X_s} \Bigg(\sum_{i=1}^n \zeta_s \phi_i \psi_j^s(x_i) \Bigg) \psi_j^s(\cdot)\\
    &= \underbrace{ \sum_{s = 0}^{\omega} \sum_{x_j \in X_s} \Big(\sum_{i=1}^n \zeta_s \phi_i \psi_j^s(x_i) \Big) \psi_j^s(\cdot)}_\text{Approximation till scale $\omega$} + \underbrace{ \sum_{s = \omega+1}^{\infty} \sum_{x_j \in X_s} \Big(\sum_{i=1}^n \zeta_s \phi_i \psi_j^s(x_i) \Big) \psi_j^s(\cdot)}_\text{Approximation from higher scales}
    \end{align*}
    Now, since the approximations at scales are greedily computed one by one (with addition of functions $\psi_j^s$ and computation of coefficients $\sum_{i=1}^n \zeta_s \phi_i \psi_j^s(x_s)$) with increase in scales. Hence, the part of approximation till scale $\omega$ will be equal to the approximation produced by the kernel truncated at scale $\omega$. Considering such an approximation
    \[
    Af^{\omega} = \sum_{i=1}^n \phi^{\omega}_i K^{\omega}(x_i,\cdot) = \sum_{s = 0}^{\omega} \sum_{x_j \in X_s} \Big(\sum_{i=1}^n \zeta_s \phi^{\omega}_i \psi_j^s(x_i) \Big) \psi_j^s(\cdot)
    \]
    Now, using the fact $Af|_{\omega} = Af^{\omega}$ we obtain
    \[
    \sum_{s = 0}^{\omega} \sum_{x_j \in X_s} \Big(\sum_{i=1}^n \zeta_s (\phi_i - \phi^{\omega}_i) \psi_j^s(x_i) \Big) \psi_j^s(\cdot) = 0
    \]
    Which leads to
    \[
    \sum_{i=1}^n (\phi_i - \phi_i^{\omega})\sum_{s=0}^{\omega}\zeta_s \sum_{x_j \in X_s}\psi_j^s(x_i) \psi_j^s(\cdot) = \sum_{i=1}^n(\phi_i - \phi_i^{\omega})K^{\omega}(\cdot,x_i) = 0
    \]
    Since, we know that the truncation scale $\omega$ was high enough such that Gramian matrix for $K^{\omega}$ is numerically positive definite (by assumption of the lemma). Hence, only way for this summation to be 0 is if $\phi_i - \phi_i^{\omega} = 0,\ \forall i$. Hence we conclude $\phi = \phi^{\omega}$
    \qed
\end{proof}

Lemma \ref{truncation1} provides a very useful result in the sense that, once our truncation scale is sufficiently high such that the associated Gramian matrix is numerically full rank, then the coefficients for such approximation are universal and can be used with kernels with truncation at any higher scale $s > \omega$. Next we provide a result that uses the size of the sparse set at higher scales as a determining factor for truncation index.  Here our analysis is motivated from \cite{griebel2015multiscale}.

\begin{theorem}
If the multiscale algorithm is truncated at a sufficiently high scale $\omega$, such that for scales higher than $\omega$, we have $\rho = \sum_{s = \omega+1}^{\infty} \zeta_s |C_s| <  \sigma_{min}(K)/{n}$, where $\sigma_{min}(K)$ is the smallest singular value for the Gramian matrix for kernel K. Then, with $Af$ and $Af^{\omega}$ representing the approximation from infinite scale kernel ($K$) and truncated scale kernel ($K^{\omega}$) respectively, we have 
\begin{equation}
    ||Af - Af^{\omega}||_{\mathcal{H}} \leq \sqrt{\frac{{n} \rho}{\sigma_{min}(K)}} ||Af||_{\mathcal{H}}
\end{equation}
\end{theorem}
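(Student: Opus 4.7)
The plan is to express the difference $Af^{\infty} - Af^{\omega}$ as the tail of the multiscale expansion, evaluate its $\mathcal{H}$-norm using the scale decomposition from Theorem \ref{native}, and then trade the Euclidean norm of the coefficient vector $\phi$ for $||Af^{\infty}||_{\mathcal{H}}$ via the smallest singular value of the full Gramian $K$. Lemma \ref{truncation1} is the crucial starting point: since the truncation scale $\omega$ has been chosen so that the Gramian for $K^{\omega}$ is numerically full rank, we have $\phi = \phi^{\omega}$, and thus
\[
Af^{\infty} - Af^{\omega} = \sum_{i=1}^n \phi_i\bigl(K(x_i,\cdot) - K^{\omega}(x_i,\cdot)\bigr) = \sum_{s > \omega} \sum_{j \in X_s} \alpha_s^j\, \psi_j^s(\cdot),
\]
where $\alpha_s^j = \zeta_s \sum_{i=1}^n \phi_i \psi_j^s(x_i)$, and only the $|C_s|$ centers that survive at scale $s$ contribute.

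Next I would apply the native-space norm from Theorem \ref{native}, which gives $||f||_{\mathcal{H}}^2 = \sum_s \zeta_s^{-1} ||f_s||_{\mathcal{H}_s}^2$ together with $||\sum_j \alpha_s^j \psi_j^s||_{\mathcal{H}_s}^2 = \sum_j (\alpha_s^j)^2$, yielding
\[
||Af^{\infty} - Af^{\omega}||_{\mathcal{H}}^2 = \sum_{s > \omega} \frac{1}{\zeta_s} \sum_{j \in X_s} (\alpha_s^j)^2.
\]
Each coefficient is controlled by Cauchy--Schwarz together with the kernel normalization $|\psi_j^s(x_i)| \leq 1$:
\[
(\alpha_s^j)^2 \leq \zeta_s^2\, ||\phi||_2^2 \sum_{i=1}^n |\psi_j^s(x_i)|^2 \leq \zeta_s^2 \, n \, ||\phi||_2^2.
\]
Summing over $j \in X_s$ produces the factor $|C_s|$, and summing over $s > \omega$ (cancelling one $\zeta_s$) gives
\[
||Af^{\infty} - Af^{\omega}||_{\mathcal{H}}^2 \leq n\, ||\phi||_2^2 \sum_{s > \omega} \zeta_s\, |C_s| = n\, \rho \, ||\phi||_2^2.
\]

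The last step uses the reproducing-kernel identity $||Af^{\infty}||_{\mathcal{H}}^2 = \phi^T K \phi \geq \sigma_{min}(K)\, ||\phi||_2^2$, so that $||\phi||_2^2 \leq ||Af^{\infty}||_{\mathcal{H}}^2 / \sigma_{min}(K)$; substituting and taking square roots delivers the claim. The routine ingredients here are only Cauchy--Schwarz and the RKHS identity; the delicate bookkeeping is ensuring that after invoking Lemma \ref{truncation1} the tail expansion of $Af^{\infty}$ genuinely involves only the sparse centers $X_s$, so that $|C_s|$ (rather than $|r_s|$) is the appropriate counting factor, and that the hypothesis $\rho < \sigma_{min}(K)/n$ is precisely what forces the multiplicative constant $\sqrt{n\rho/\sigma_{min}(K)}$ to lie below one, making the bound informative.
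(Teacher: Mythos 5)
Your proposal is correct and follows essentially the same route as the paper's proof: invoke Lemma \ref{truncation1} to identify the difference with the tail of the expansion, compute the native-space norm scale by scale, bound each coefficient by $n\zeta_s^2||\phi||_2^2$, and convert $||\phi||_2^2$ into $||Af^{\infty}||_{\mathcal{H}}^2/\sigma_{min}(K)$. The only cosmetic difference is that you bound the inner sum via Cauchy--Schwarz while the paper uses the $\ell_1$--$\ell_2$ norm inequality after $|\psi_j^s(x_i)|\leq 1$; both yield the identical factor $n||\phi||_2^2$.
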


\begin{proof}
    Directly starting with the error of truncation at some scale $\omega$, let $\phi$ be the weights for approximation with full kernel (correspondingly, $\phi^{\omega}$ be the weight for the truncated kernel). Here, in the second equality we use Lemma \ref{truncation1} to get the expression for component of approximation after truncation. 
    \begin{align*}
        ||Af - Af^{\omega}||^2_{\mathcal{H}} &= ||K \phi - K^{\omega} \phi^{\omega}||^2_{
        \mathcal{H}}\\ &=  \Bigg|\Bigg| \sum_{s = \omega+1}^{\infty} \sum_{x_j \in X_s} \Big(\sum_{i=1}^n \zeta_s \phi_i \psi_j^s(x_i) \Big) \psi_j^s(\cdot) \Bigg|\Bigg|^2_{\mathcal{H}}\\
        &= \sum_{s = \omega + 1}^{\infty} \sum_{x_j \in X_s} \frac{|\sum_{i=1}^n \zeta_s \phi_i \psi_j^s(x_i)|^2}{\zeta_s}\\
        & \leq \sum_{s = \omega + 1}^{\infty} \sum_{x_j \in X_s} \zeta_s \Bigg(\sum_{i=1}^n |\phi_s|\cdot|\psi_j^s(x_i)| \Bigg)^2\\
        &\leq \sum_{s = \omega + 1}^{\infty} \sum_{x_j \in X_s} \zeta_s \Bigg( \sum_{i=1}^n |\phi_i|\Bigg)^2, \quad \text{since }|\psi_j(x_i)| \leq 1\\
        & \leq \sum_{s = \omega + 1}^{\infty} \sum_{x_j \in X_s} n\zeta_s ||\phi||^2_2 = \sum_{s=\omega+1}^{\infty}n \zeta_s |C_s|\cdot ||\phi||_2^2
    \end{align*}
    Here the last inequality uses equality of norms in finite dimensional spaces. Now, for any RKHS (with $\sigma_{min}(K)$ representing the minimum singular value of the Gramian matrix, obtained from the multiscale kernel K), we have
    \[
    ||Af||^2_{\mathcal{H}} = \Bigg<\sum_{i=1}^n\phi_i K(\cdot,x_i),\sum_{j=1}^n\phi_j K(\cdot,x_j) \Bigg>_{\mathcal{H}} \geq \sigma_{min}(K)||\phi||_2^2
    \]
    Hence, now we have
    \[
    ||Af - Af^{\omega}||^2_{\mathcal{H}} \leq \frac{n}{\sigma_{min}(K)}\Bigg(\sum_{s=\omega+1}^{\infty}\zeta_s|C_s|\Bigg) ||Af||^2_{\mathcal{H}}
    \]
    By substituting $\rho$ and taking square root both sides, we obtain the desired result.
    \qed
\end{proof}

Now, we move on to studying the performance of our approach as a multiscale approximation scheme for noisy datasets for which we present the following theorem. Here the idea is to show the boundedness of the obtained sparse representation. Additionally, it also bounds the size of the sparse representation in case the full kernel was used instead of a truncated one.

\begin{theorem}
With truncation at scale $\omega$, when the multiscale algorithm stops, we have
\begin{enumerate}
    \item The total reduction in mean squared error is  at least $\sum_{s = 0}^{\omega} \frac{(|C_s|-1)\vartheta_s^2\epsilon_s^2}{n}$.
    \item The total size of the final sparse representation is upper bounded in the sense
    \begin{equation}
        |C^{\omega}| = \sum_{s = 0}^{\omega}|C_s| \leq \frac{||t_0||_2^2 - ||t_{\omega+1}||_2^2 + \sum_{s=0}^{\omega} \vartheta_s^2 \epsilon_s^2}{\min_{s \leq \omega} \vartheta_s^2 \epsilon_s^2}
    \end{equation}
    where $t_0$ and $t_{\omega+1}$ represent the targets at scale 0 and $\omega+1$ respectively.    \item With finite scale truncation at $\omega$, for $s > \omega$, the number of elements missing from the sparse representation due to truncation is upper bounded in the sense
    \begin{equation}
        |C^{> \omega}| = \sum_{s=\omega+1}^{\infty}|C_s| < \frac{n \mathcal{E}^{\omega}}{\epsilon_{\omega+1}^2} \leq  \frac{\vartheta^4_{\omega+1}}{\gamma^2}
    \end{equation}
    where $\gamma$ is from (\ref{cond1}) and $\mathcal{E}^{\omega}$ is the remaining MSE after truncation scale $\omega$ as shown in Algorithm \ref{Algo1}.
\end{enumerate}

\end{theorem}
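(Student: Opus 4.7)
The plan is to derive all three bounds from the per-addition MSE reduction guarantee of Theorem~\ref{set_epsilon}, namely that each function inserted during forward selection at scale $s$ decreases the MSE by at least $\vartheta_s^2\epsilon_s^2/n$ (equation~(\ref{increase})), combined with the fact that the cumulative MSE increase tolerated by backward deletion at scale $s$ is at most $\vartheta_s^2\epsilon_s^2/n$ (Remark~\ref{rem_set_epsilon}.2 and the stopping rule of Algorithm~\ref{Algo4}).

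For part~1, I would let $k_s$ denote the number of columns inserted by forward selection at scale $s$; since backward deletion only removes columns, $k_s \geq |C_s|$. Summing the per-addition reductions over the $k_s$ inserts and then subtracting the maximum cumulative backward increase, the net MSE reduction at scale $s$ is at least $(k_s-1)\vartheta_s^2\epsilon_s^2/n \geq (|C_s|-1)\vartheta_s^2\epsilon_s^2/n$. Summing over $s=0,\ldots,\omega$ yields the stated bound. For part~2, I note that MSE at the start of scale $s$ is $\|t_s\|_2^2/n$ and at the end is $\|t_{s+1}\|_2^2/n$, so the per-scale reduction equals $(\|t_s\|_2^2-\|t_{s+1}\|_2^2)/n$, which by part~1 is at least $(|C_s|-1)\vartheta_s^2\epsilon_s^2/n$. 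Telescoping and rearranging gives
\[
\sum_{s=0}^{\omega}|C_s|\vartheta_s^2\epsilon_s^2 \;\leq\; \|t_0\|_2^2 - \|t_{\omega+1}\|_2^2 + \sum_{s=0}^{\omega}\vartheta_s^2\epsilon_s^2,
\]
and lower-bounding the left-hand side by $(\min_{s\leq\omega}\vartheta_s^2\epsilon_s^2)\,|C^{\omega}|$ finishes part~2.

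For part~3, I would apply the part~1 reduction bound to the tail $s>\omega$, using that the cumulative MSE reduction past scale $\omega$ is at most $\mathcal{E}_\omega$ because MSE stays non-negative; this gives $\sum_{s>\omega}(|C_s|-1)\vartheta_s^2\epsilon_s^2 \leq n\mathcal{E}_\omega$. Two ingredients then shape this into the stated form: first, $\vartheta_s\geq 1$ because the normalization $K_s(x_j,x_j)=1$ forces $\|b_s^j\|_2\geq 1$; second, $\epsilon_s$ is monotonically non-decreasing by Proposition~\ref{epsilonrate}, so $\epsilon_s\geq\epsilon_{\omega+1}$ for all $s\geq\omega+1$. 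Combining these gives $\sum_{s>\omega}(|C_s|-1) \leq n\mathcal{E}_\omega/\epsilon_{\omega+1}^2$, and hence the strict bound $|C^{>\omega}| < n\mathcal{E}_\omega/\epsilon_{\omega+1}^2$. The last inequality follows from the first branch of~(\ref{setepsilon1}), namely $\epsilon_{\omega+1}^2 \geq \gamma^2\|t_{\omega+1}\|_2^2/\vartheta_{\omega+1}^4 = \gamma^2 n\mathcal{E}_\omega/\vartheta_{\omega+1}^4$, by substitution.

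The hard part will be the accounting in part~3: the telescoping naturally yields $\sum(|C_s|-1)$ rather than $\sum|C_s|$, so the $+1$ per active scale has to be absorbed either by the strict inequality or by the practical observation that only finitely many scales past $\omega$ can have $|C_s|\geq 1$ (otherwise part~1 applied to the tail would force $\mathcal{E}_\omega=\infty$). Verifying that the kernel normalization $\vartheta_s\geq 1$ really is in force throughout (so that $\vartheta_s^2\epsilon_s^2 \geq \epsilon_{\omega+1}^2$ can be used uniformly for $s>\omega$) is the secondary check to make before declaring the argument complete.
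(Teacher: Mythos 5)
Your parts 1 and 2 follow the paper's proof essentially verbatim: per-column reduction of at least $\vartheta_s^2\epsilon_s^2/n$ from forward selection, subtract the single backward-deletion allowance of $\vartheta_s^2\epsilon_s^2/n$ to get the $(|C_s|-1)$ factor, then telescope $\|t_s\|_2^2/n$ across scales and divide by $\min_{s\le\omega}\vartheta_s^2\epsilon_s^2$. No issues there.

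Part 3 is where you diverge, and the divergence opens a gap you flag but do not close. By routing the tail estimate through part 1 you obtain $\sum_{s>\omega}(|C_s|-1)\vartheta_s^2\epsilon_s^2 \le n\mathcal{E}_{\omega}$, and the passage from $\sum(|C_s|-1)$ to $\sum|C_s|$ costs one unit per active tail scale. Neither of your proposed fixes works: the strict inequality in the final statement cannot absorb an additive term equal to the number of active scales past $\omega$ (which can be large), and the observation that only finitely many tail scales are active bounds that additive term but does not eliminate it. The paper avoids the problem by \emph{not} invoking part 1 for the tail: it counts only the forward-selection reductions (each of the $|C_s|$ retained columns individually reduced the MSE by at least $\vartheta_s^2\epsilon_s^2/n$ when inserted, and backward deletion can only shrink $|C_s|$), and then caps the total reduction by the remaining MSE $\mathcal{E}_{\omega}$, giving directly
\begin{equation*}
\sum_{s=\omega+1}^{\infty}\frac{|C_s|\,\vartheta_s^2\epsilon_s^2}{n}\;\le\;\mathcal{E}_{\omega},
\end{equation*}
with no $-1$ correction. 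From there the two arguments coincide: $\vartheta_s\ge 1$, monotonicity of $\epsilon_s$ from Proposition \ref{epsilonrate}, and $\epsilon_{\omega+1}^2\ge \gamma^2 n\mathcal{E}_{\omega}/\vartheta_{\omega+1}^4$ yield the stated bounds. To repair your version you would need either to drop the backward-deletion correction for the tail as the paper does, or to prove separately that the number of active scales beyond $\omega$ is itself dominated by the slack in the inequality; as written, the claimed strict bound $|C^{>\omega}|<n\mathcal{E}_{\omega}/\epsilon_{\omega+1}^2$ does not follow from what you derived.
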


\begin{proof}
    (1): Starting with any scale s, since from (\ref{back_thresh}) we know that with addition of every individual column $b^j_s$, the MSE is reduced atleast by $\frac{\vartheta_s^2\epsilon_s^2}{n}$. Hence, for the sparse representation $C^{*}_s$ chosen at this scale during forward selection, the reduction in MSE is atleast
    \[
\frac{|C^{*}_s|\vartheta_s^2\epsilon_s^2}{n}
    \]
    Now, since, in the backward deletion procedure, columns are removed until the cumulative increase in error is less than $\frac{\vartheta_s^2\epsilon_s^2}{n}$. Hence the net reduction in error after scale s is fully explored is lower bounded by
    \begin{equation}
\frac{(|C^{*}_s|-1)\vartheta_s^2\epsilon_s^2}{n}
    \end{equation}
    Using the fact that the final sparse representation at scale s ($C_s$) will follow the relation $|C_s| \leq |C^{*}_s|$, and summing this from scale 0 to $\omega$ completes the proof of (1).
    
    (2): Since the actual total reduction in MSE after truncation at scale $\omega$ (represented as $||y||_2^2/n - \mathcal{E}^{\omega})$ will be lower bounded by the error reduction in part (1). Hence
    \[
    \frac{||y||_2^2}{n} - \mathcal{E}^{\omega} \geq \sum_{s=0}^{\omega} \frac{(|C_s|-1)\vartheta_s^2\epsilon_s^2}{n}
    \]
    On rearranging and using $n\mathcal{E}^{\omega} = ||t_{\omega+1}||_2^2$ and $y = t_0$, we get
    \[
    \sum_{s=0}^{\omega}|C_s|\vartheta_s^2 \epsilon_s^2 \leq \sum_{s=0}^{\omega}\vartheta_s^2 \epsilon_s^2 + ||t_0||_2^2 - ||t_{\omega+1}||_2^2
    \]
    Using the fact $\min_{s \leq \omega} (\vartheta_s^2 \epsilon_s^2) \sum_{s=0}^{\omega} |C_s| \leq \sum_{s=0}^{\omega} \vartheta_s^2 \epsilon_s^2 |C_s|$. we obtain the desired result.

    (3): Now considering the case where we keep on continuing the procedure beyond scale $\omega$. In this case at any scale $s > \omega$, the reduction in MSE with every addition of column for forward selection is still atleast $\frac{\vartheta_s^2\epsilon_s^2}{n}$. Here, we dont consider the increment in MSE due to backward deletion as even in the extreme case, with no backward steps at all scales $>$ $\omega$, the following inequality should be satisfied 

    \[
    \sum_{s=\omega + 1}^{\infty} \frac{|C_s|\vartheta_s^2\epsilon_s^2}{n} \leq \mathcal{E}^{\omega}
    \]
    This inequality follows from the fact that overall reduction in the MSE cannot be more than the total remaining MSE after scale $\omega$. Hence, we have
    \begin{equation}
    \label{sparse_g_omega}
      \sum_{s=\omega + 1}^{\infty} |C_s| \leq \frac{n\mathcal{E}^{\omega}}{\min_{s>\omega} \vartheta_s^2 \epsilon_s^2}       
    \end{equation}
    Now, using the fact that $\vartheta_s$ is a monotonically decreasing function of s (from the definition in (\ref{vartheta})) while satisfying $\vartheta_s \geq 1, \forall s$. Additionally $\epsilon_s$ is monotonically increasing (from Proposition \ref{epsilonrate}). 
    Hence we have
    \[|C^{> \omega}| = 
    \sum_{s=\omega + 1}^{\infty} |C_s| \leq \frac{n\mathcal{E}^{\omega}}{\min_{s>\omega} \vartheta_s^2 \cdot \min_{s>\omega} \epsilon_s^2}  < \frac{n \mathcal{E}^{\omega}}{\epsilon_{\omega+1}^2}
    \]
    
    But $\epsilon_s \geq \frac{\gamma ||t_s||_2}{\vartheta_s^2} = \frac{\gamma \sqrt{n \mathcal{E}^{s-1}} }{\vartheta_s^2}$. Hence $\epsilon_{\omega+1}^2 \geq \frac{\gamma^2 n \mathcal{E}^{\omega}}{\vartheta_{\omega+1}^4}$. Putting it in the above expression gives
    
    \[
    |C^{> \omega}| < \frac{\vartheta^4_{\omega+1}}{\gamma^2}
    \]
    
    Thus completing the proof.
    \qed
\end{proof}

\begin{figure}[h]
\centering
\includegraphics[width=6.5cm]{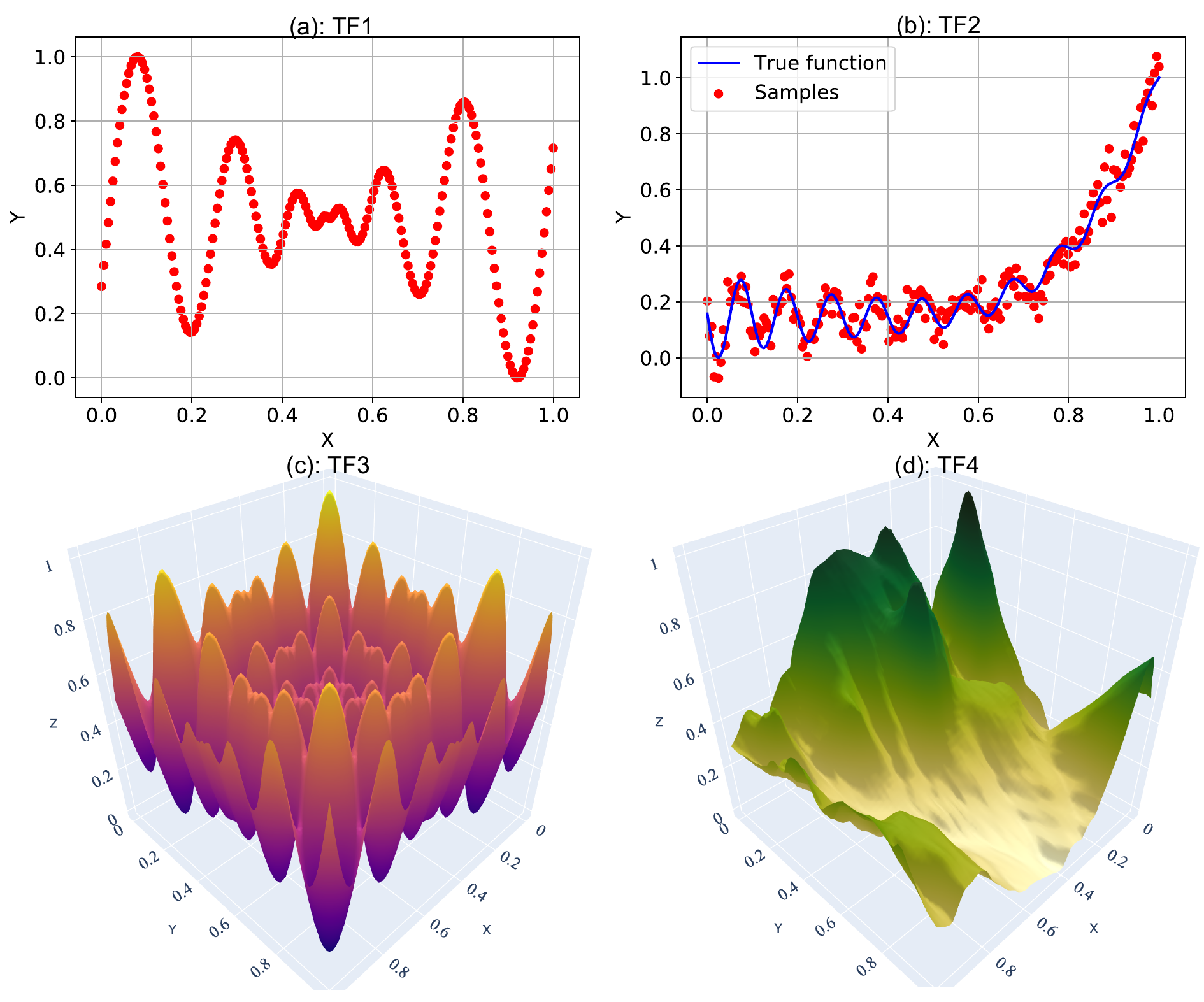}
\caption{Test Functions (a): noise free samples from 1d Schwefel function; (b): noisy samples from 1d Gramacy and Lee function; (c) noise free samples from 2d Schwefel function; (d): a DEM (topography) dataset.}
\label{Pic1}
\end{figure}

\section{Results}
In this section we analyze the performance of the presented multiscale approach on a variety of datasets (code link is in the Declarations section at the end). We begin with the datasets \cite{simulationlib} shown in Fig. \ref{Pic1}. Here samples in (a) and (b) represent univariate datasets (sample size = 200), with (b) specifically infected with noise to represent a more complex scenario. (c) and (d) extends the setting to multivariate datasets. Here (c) is an analytical function with multiple local maxima and minima (sample size = 2500). This is followed by (d), which is a Digital Elevation Model (DEM) dataset (sample size = 5336), i.e. a map of topography. It should be noted that without loss of generality, we have normalized all the datasets to values from 0 to 1. In this section we begin with convergence studies for these four test functions, followed by a study on analyzing the behavior of $\epsilon_s$ and how it is updated over the scales. Convergence is also studied in terms of the relative magnitude of sparse representation (with respect to the full data size). Then we analyze the quality of multiscale reconstructions, more specifically studying the stability of the reconstructions with respect to the sampling design.

\begin{figure}[h]
\centering
\includegraphics[width=10cm]{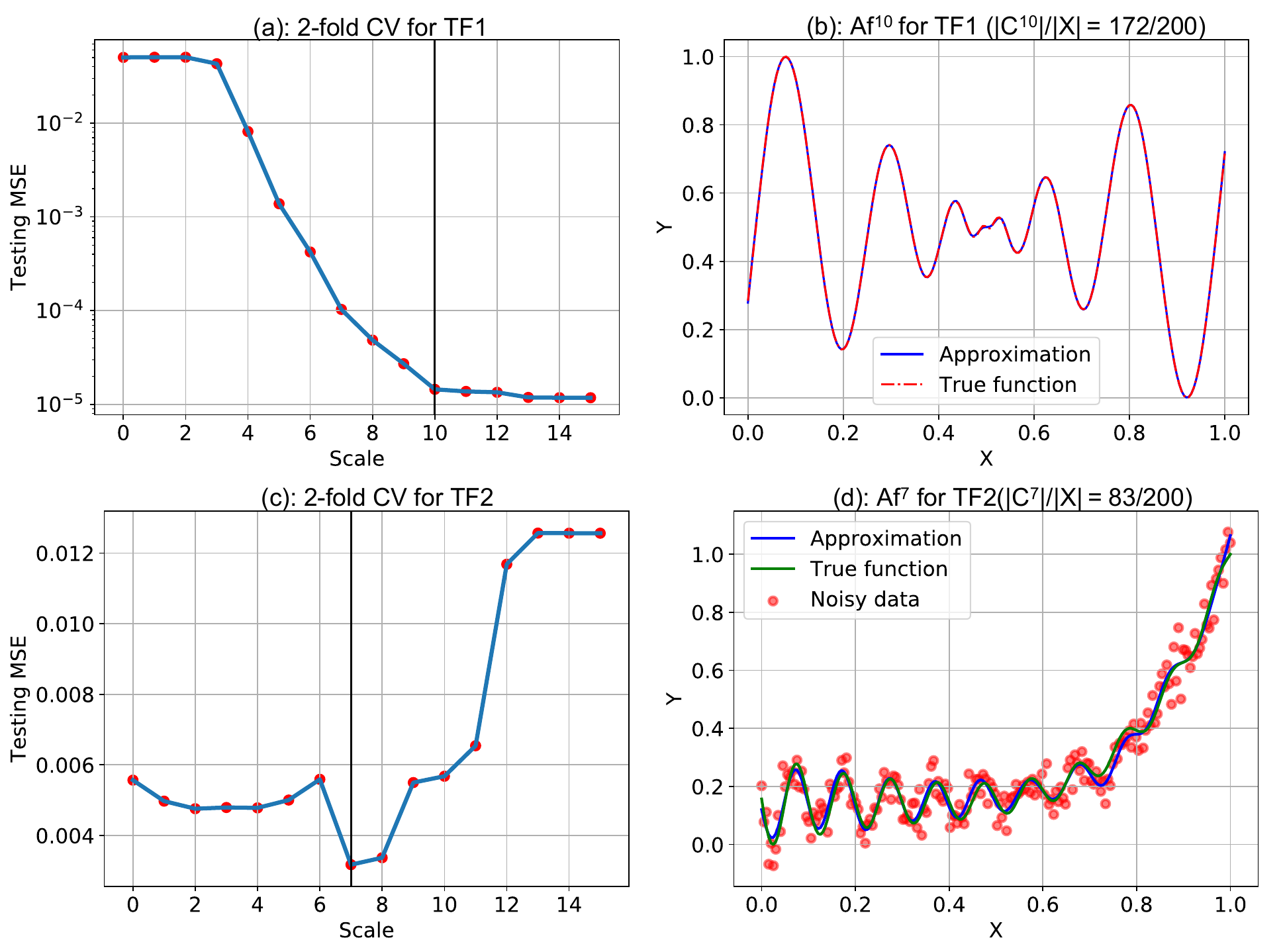}
\caption{Visualizing the decay in testing MSE with increasing scales for TF1 (a) and TF2 (c). For the chosen truncation scales (indicated through black vertical lines), the corresponding reconstructions are shown in (b) and (d) respectively. The headers in (b) and (d) also present the relative sizes of the sparse representation with respect to the total number of samples.}
\label{Pic2}
\end{figure}

We further analyze the performance of our approach as a data reduction and learning model on two other real datasets. The first dataset consists of grayscale images of five different people (comes directly bundled with the sklearn python package \cite{scikit-learn}, though the original data source is  $https://cs.nyu.edu/~roweis/$), where we analyze the ability of data reduction coupled with efficient reconstructions. The last dataset is a photon cloud for the ICESat-2 \cite{neumann2019ice} satellite tracks for McMurdo Dry Valleys in Antarctica. Here the objective is one of the well known problems in Remote Sensing \cite{smith2019land}, dealing with inferring the true surface from large photon clouds.

\begin{figure}[h]
\centering
\includegraphics[width=10cm]{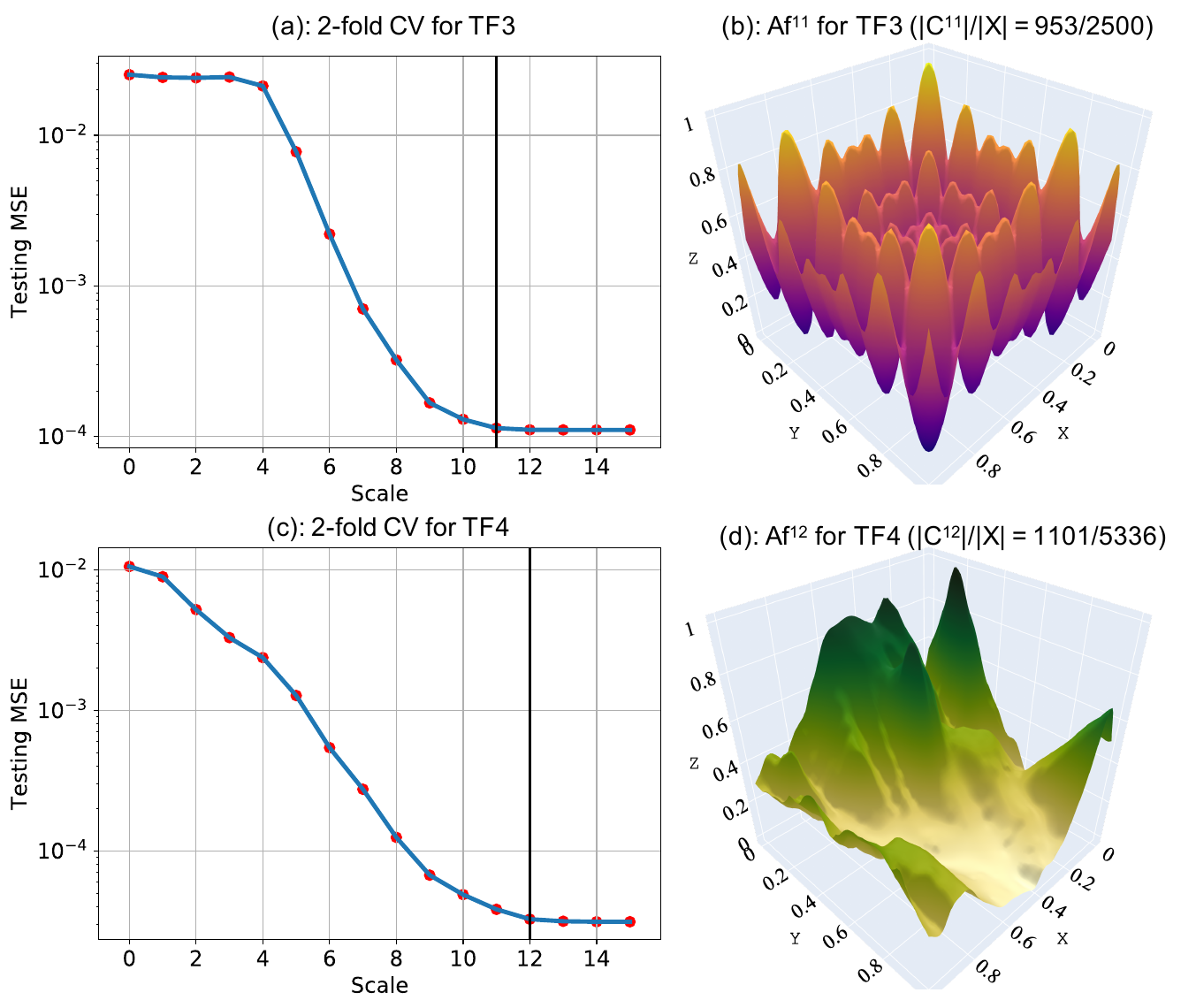}
\caption{Visualizing the decay in testing MSE with increasing scales for TF3 (a) and TF4 (c). For the chosen truncation scales (indicated through black vertical lines), the corresponding reconstructions are shown in (b) and (d) respectively. The headers in (b) and (d) also present the relative sizes of the sparse representation with respect to the total number of samples.}
\label{Pic3}
\end{figure}

Following the results of Proposition \ref{epsilonrate}, and by assuming that we explore the highest scale of 15, we set the starting tolerance $\epsilon_0$ as $\frac{\delta \vartheta_{15}}{\vartheta_0}$. For all univariate approximations we assume $\delta = 10^{-3}$. For higher dimension approximations, we assume $\delta = 10^{-2}$. This assumption is consistent for all the examples shown in this paper. The algorithm is found to work well with other similar small values of $\delta$ as well. 

\subsection{Understanding the general algorithmic behavior}

Following the discussion from the section 3, here we present the full procedure for inferring the truncation scale using K-fold Cross-Validation (K-fold CV). The basic idea is to divide the training data into $K$ parts and using $K-1$ parts for training, while testing on remaining data. This is permuted for the $K$ possible combinations of the training and testing data. The scale that produces minimum mean testing error, is then regarded as our optimal scale. In this paper we present the results for 2-fold CV, which can easily be extended to more splits ($K$) based on user preference.

Figure \ref{Pic2},\ref{Pic3} show the convergence results for the 4 test functions (TF) introduced in Fig. \ref{Pic1}. Starting with Fig. \ref{Pic2}, panel (a) and (c) show the testing MSE for TF1 and TF2, while producing approximations with different truncation scale. The idea here is to find the scale that produces the best generalizable (does well on unseen data points) approximation. Starting with TF1 in panel (a), we have chosen scale 10 to be our optimal scale. It should be noted that while choosing the best scale for a particular dataset is somewhat subjective, the underlying idea is to choose the best trade-off between low-scale and low testing MSE (as truncation at higher scales give better approximation for noiseless datasets, but it also leads to bigger sparse representations, thus diminishing the benefit of data reduction). The approximation with truncation at scale 10 (chosen in Fig. \ref{Pic2}(a)), is shown in Fig. \ref{Pic2}(b). Here $Af^{10}$ uses 172 data points (out of 200) for producing the shown approximation. Moving to TF2 (panel (c) and (d) in Fig. \ref{Pic2}), (c) shows a very different behavior as compared to (a). After reaching the lowest testing MSE at scale 7, it increases to very high levels of prediction errors. This is because of the inherent noise in the samples in TF2, which leads to extreme overfitting at higher scales. The produced approximation $Af^7$ for TF2 is shown in Fig. \ref{Pic2}(d). It should be noted here that for TF2, our sparse representation consists of 83 data points out 200, which is much smaller than TF1. This can be due to the relatively simple functional structure of TF2 as compared to TF1.

\begin{figure}[h]
\centering
\includegraphics[width=12cm]{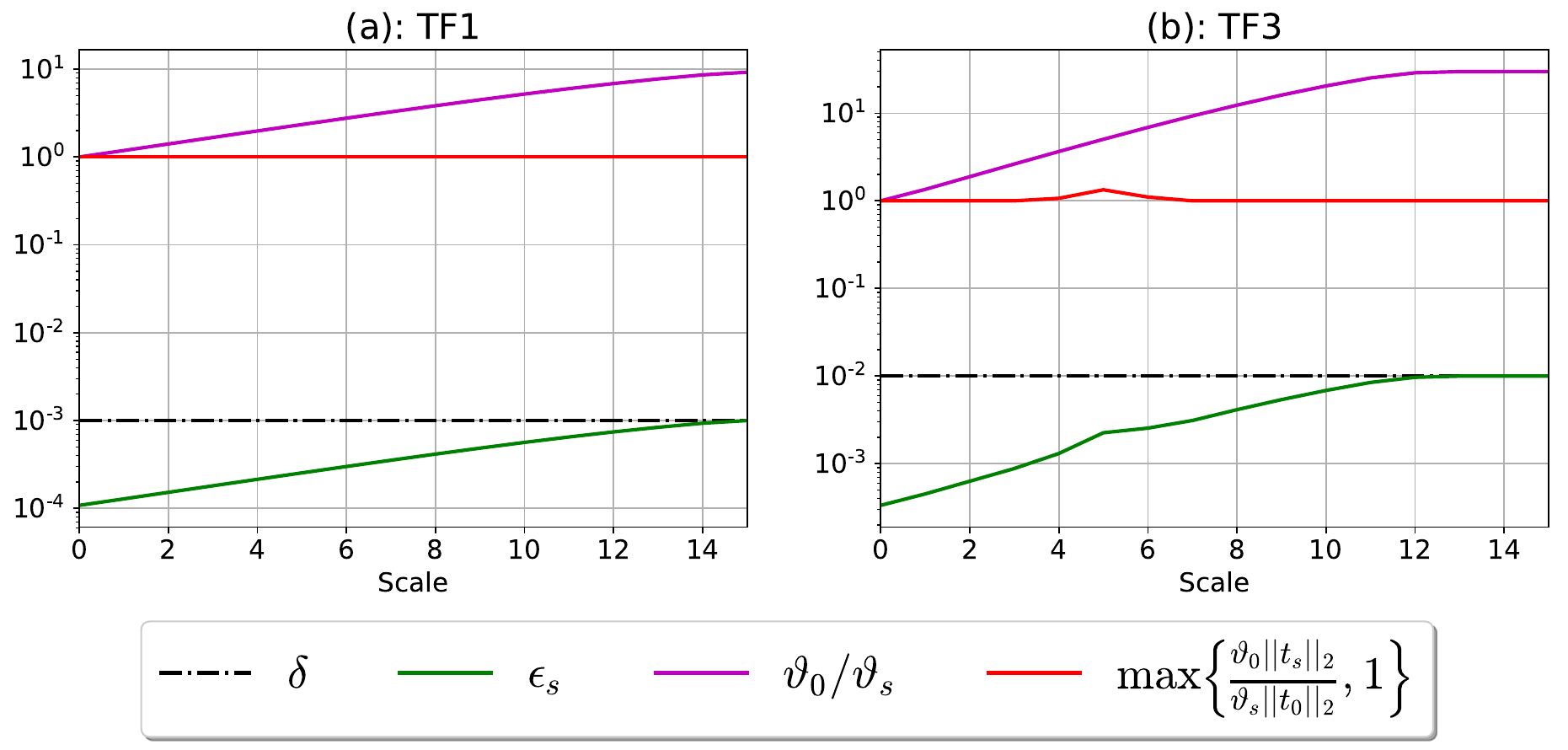}
\caption{Analysis for the behavior of $\epsilon_s$ with increasing scales for TF1 and TF3. This study is targeted to demonstrate results of Proposition \ref{epsilonrate}.}
\label{Pic4}
\end{figure}

Fig. \ref{Pic3} shows the same analysis for TF3 and TF4. Here again it should be noted that we can choose any small scale as long as the testing MSE is acceptable with respect to our application. With convergence scale ($\omega$) as 11 and 12 respectively for TF3 and TF4, we get a data reduction of $62\%(1 - 953/2500)$ and $80\% (1-1101/5336)$ respectively. This higher level of data reduction for TF3 and TF4, show the increased efficiency of the algorithm to produce better sparse representations (due to more redundancy depending on the sampling design) for multivariate datasets.

Next we move onto analyzing the behavior of the tolerance parameter $\epsilon_s$ with increasing scales. As mentioned in Proposition \ref{epsilonrate}, $\epsilon_s$ increases monotonically with s. This helps to avoid algorithmic stalling at higher scales where it is easy to find sufficiently independent functions (due to narrower support of the basis), giving no considerable improvement in the approximation. From Proposition \ref{epsilonrate}, we get
\begin{equation}
\label{pattern}
    \epsilon_s = \epsilon_0 \Bigg(\frac{\vartheta_0}{\vartheta_s} \Bigg) \max \Bigg\{ \frac{\vartheta_0||t_s||_2}{\vartheta_s||t_0||_2},1 \Bigg\}
\end{equation}

Fig. \ref{Pic4} shows the variation of the three different scale dependent terms (which includes $\max \Big\{ \frac{\vartheta_0||t_s||_2}{\vartheta_s||t_0||_2},1 \Big\}$, $\vartheta_0/\vartheta_s$ and $\epsilon_s$) in the above equation. It also confirms the the result of Proposition \ref{epsilonrate} stating $\epsilon_s \leq \delta$ as long as $\epsilon_0$ is set appropriately ($\epsilon_0 \leq \delta \vartheta_s/\vartheta_0$). It is important to note here that among the two terms that affect $\epsilon_s$ (in (\ref{pattern})), $(\vartheta_0/\vartheta_s)$ has a dominating behavior. If we move $\epsilon_0(\vartheta_0/\vartheta_s)$ inside the $max$ function, we realize the condition ensuring a minimum level of improvement ($\Delta$) in the MSE when a new function is added, is generally the driving factor behind this increase in $\epsilon_s$. However, the other condition related to good conditioning of the basis set, also dominates in a few instances depending on how slowly $||t_s||_2$ deteriorates with respect to $||t_0||_2$ (for example, the small red peak at scale 5 in Fig. \ref{Pic4}(b)). Like mentioned before, this overall increase in $\epsilon_s$ is helpful to avoid algorithmic stalling at higher scales resulting from unnecessarily high number of columns chosen during forward selection.

\begin{figure}[h]
\centering
\includegraphics[width=12cm]{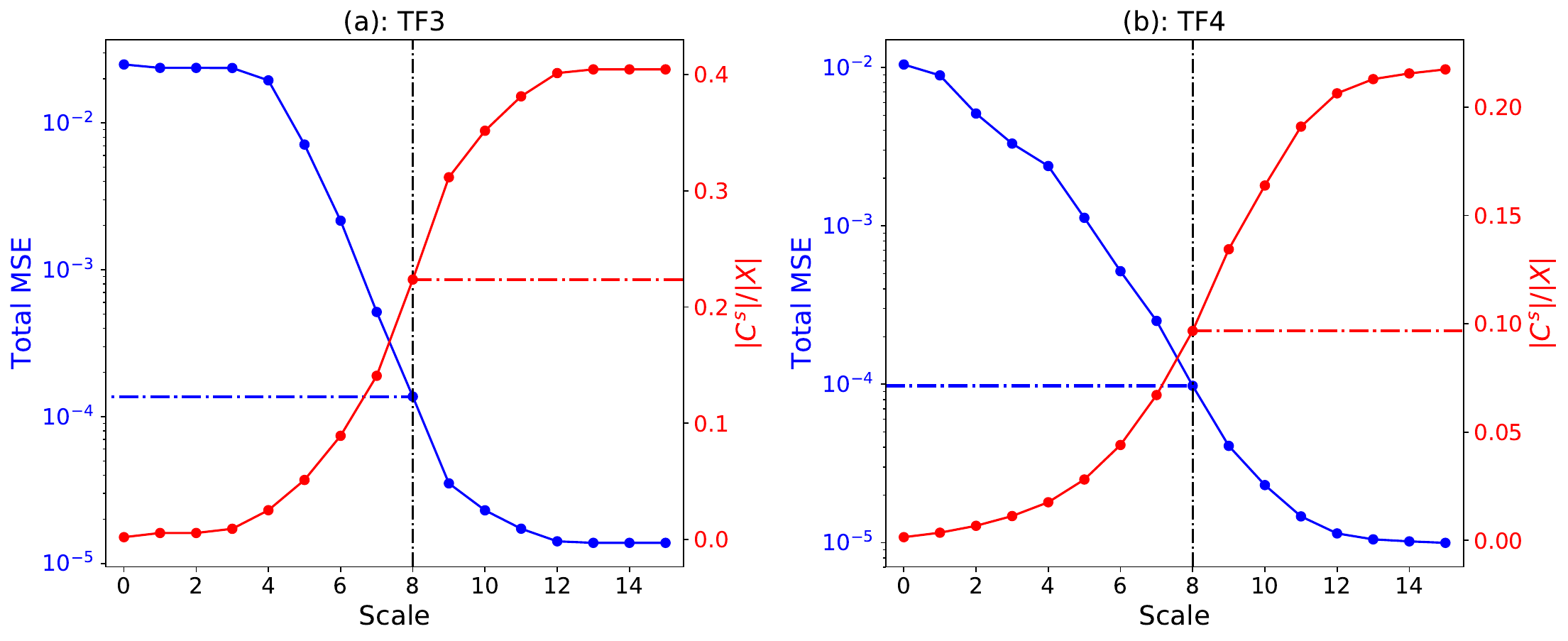}
\caption{The convergence analysis in terms of reduction of total MSE, and relative size of sparse representation with increasing scales for TF3 (a) and TF4 (b). The vertical line at 8 serves as an example for inferring the quality of $Af^8$ in terms of total MSE, while also noting the corresponding relative size ($|C^8|/|X|$) of the sparse representation.}
\label{Pic5}
\end{figure}

\begin{figure}[h]
\centering
\includegraphics[width=12cm]{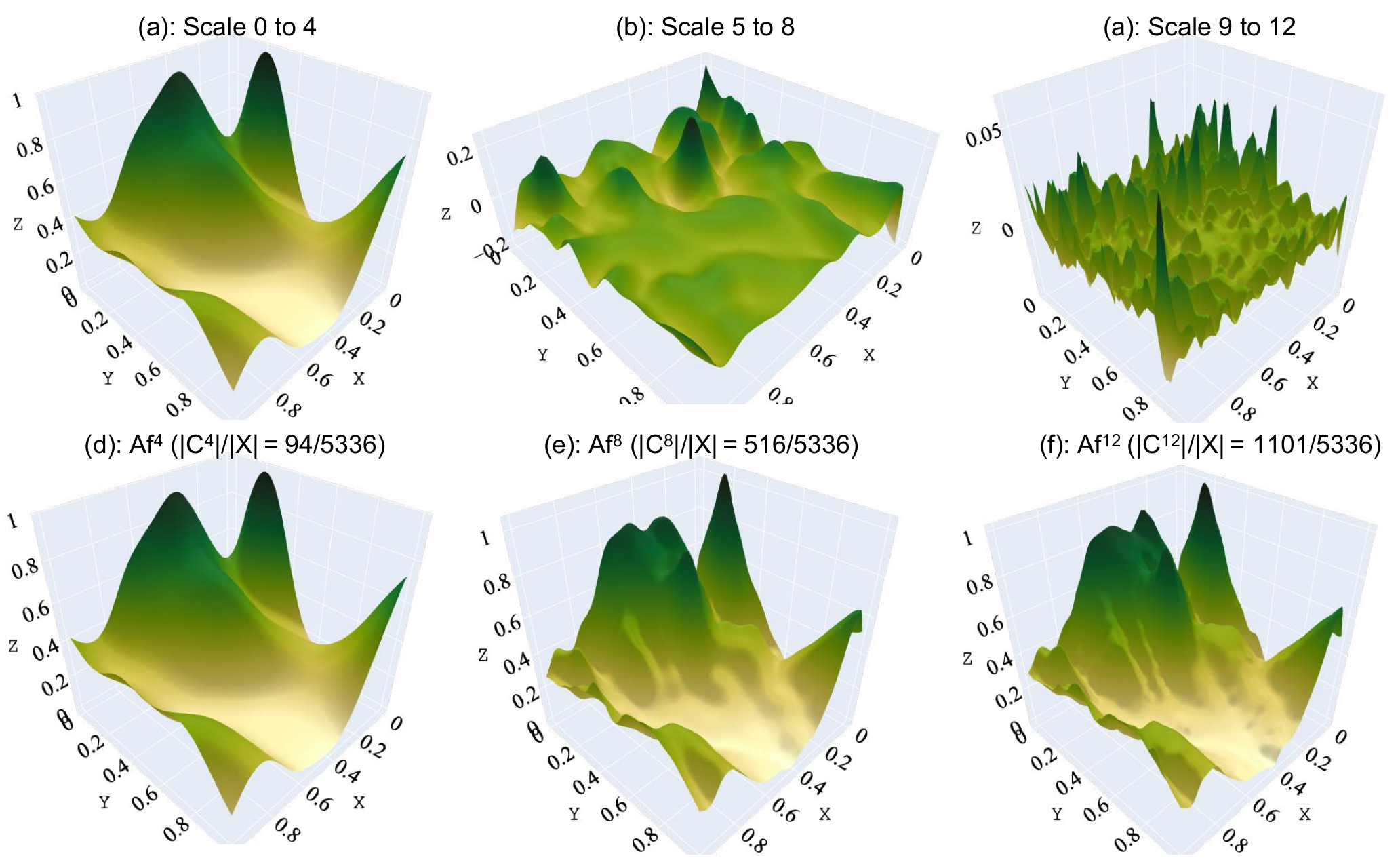}
\caption{Multiscale approximation for TF4. Here (a), (b) and (c) show the contribution of scales 0-4, 5-8 and 9-12 to the final approximation. The plots in the bottom panel ((d), (e) and (f)), show the final approximation obtained till scale 4, 8 and 12 respectively. The headers also show the relative size of the sparse representation to convey the quality of data reduction.}
\label{Pic6}
\end{figure}

While deciding the truncation scale, besides considering the testing MSE, it is also crucial to analyze the size of the respective sparse representation. Fig. \ref{Pic5}, shows the behavior pattern of these two quantities for the 2 bivariate test functions considered (TF3 and TF4). Here we have not shown TF1 and TF2 due to small data sizes for these test functions. Starting from Fig. \ref{Pic5}(a), we analyze the decay of Total MSE (fitting on full data and and computing the reconstruction error), with respect to proportion of sparse representation (denoted as $|C^s|/|X|$). Here by fixing a scale number on $X-axis$ (scale 8 is chosen as an example), we can see where it intersects the blue and red curve, and correspondingly find the reconstruction error and proportion of sparse representation respectively. Using such plots, we can infer results such as, for TF3 at scale 8, with a sparse representation consisting of less than $25\%$ of the full dataset, we can produce approximations with a
total MSE close to $10^{-4}$. Corresponding result for TF4 is shown in panel (b). The data reduction of TF4 is even more efficient, as we also saw in Fig. \ref{Pic3}. Overall, this analysis is really crucial for deciding the truncation scale. Hence, depending on the problem at hand and desired level of data reduction, analysis similar to Fig. \ref{Pic2} and \ref{Pic3} should be coupled with analysis shown in Fig. \ref{Pic5} for an informed decision.

We now move forward to the results in Fig. \ref{Pic6} and analyze the different multiscale approximations produced by the proposed approach. Working with TF4, here we show components contributed by different set of scales in the final overall approximation in panels (a), (b) and (c). Here it should be noted that contributions from scale 0 to 4 almost capture the basic structure (global behavior) of the overall data. The local high frequency components in the data are captured at higher scales (specifically scale 9 to 12 in panel (c)). The bottom panels here show the cumulative approximation till the current scale. Hence (d), (e) and (f) show the final approximations till scale 4, 8 and 12 respectively. It is worth noting here that sharper features are becoming more evident as we move from (d) to (f) in the topographical reconstruction of the full dataset.

\begin{figure}[h]
\centering
\includegraphics[width=10cm]{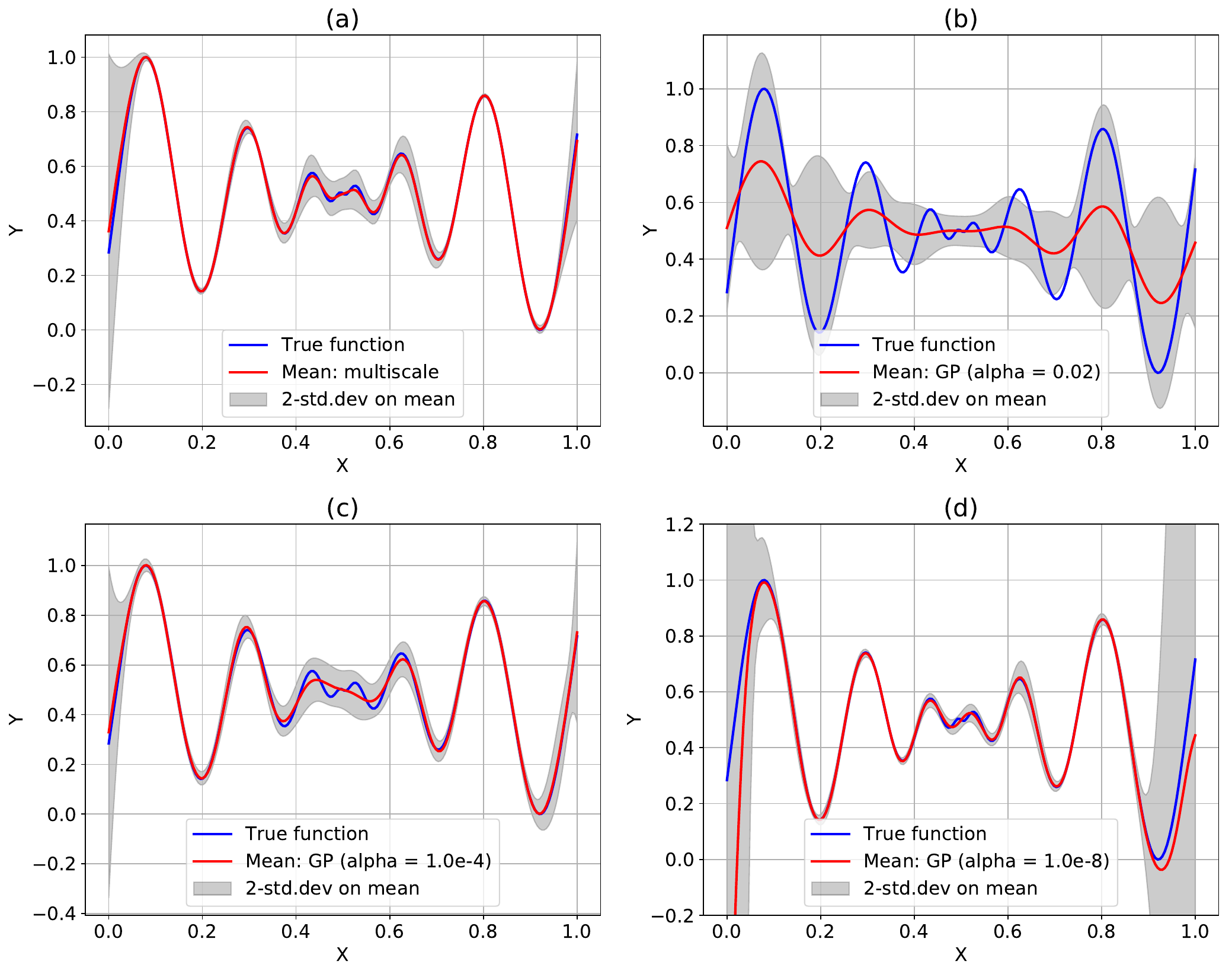}
\caption{Stability study for comparing the performance of the proposed multiscale approach (a), with respect to Gaussian Process regression with three different penalization parameter in (b), (c) and (d). The shown approximations are the mean and 2 standard deviation bounds on the mean for the the approximations produced for 100 different training sets obtained by randomly selecting 50 samples (out of the 200 samples for TF1) without replacement repeatedly for 100 times.}
\label{Pic7}
\end{figure}

\begin{figure}[h]
\centering
\includegraphics[width=12cm]{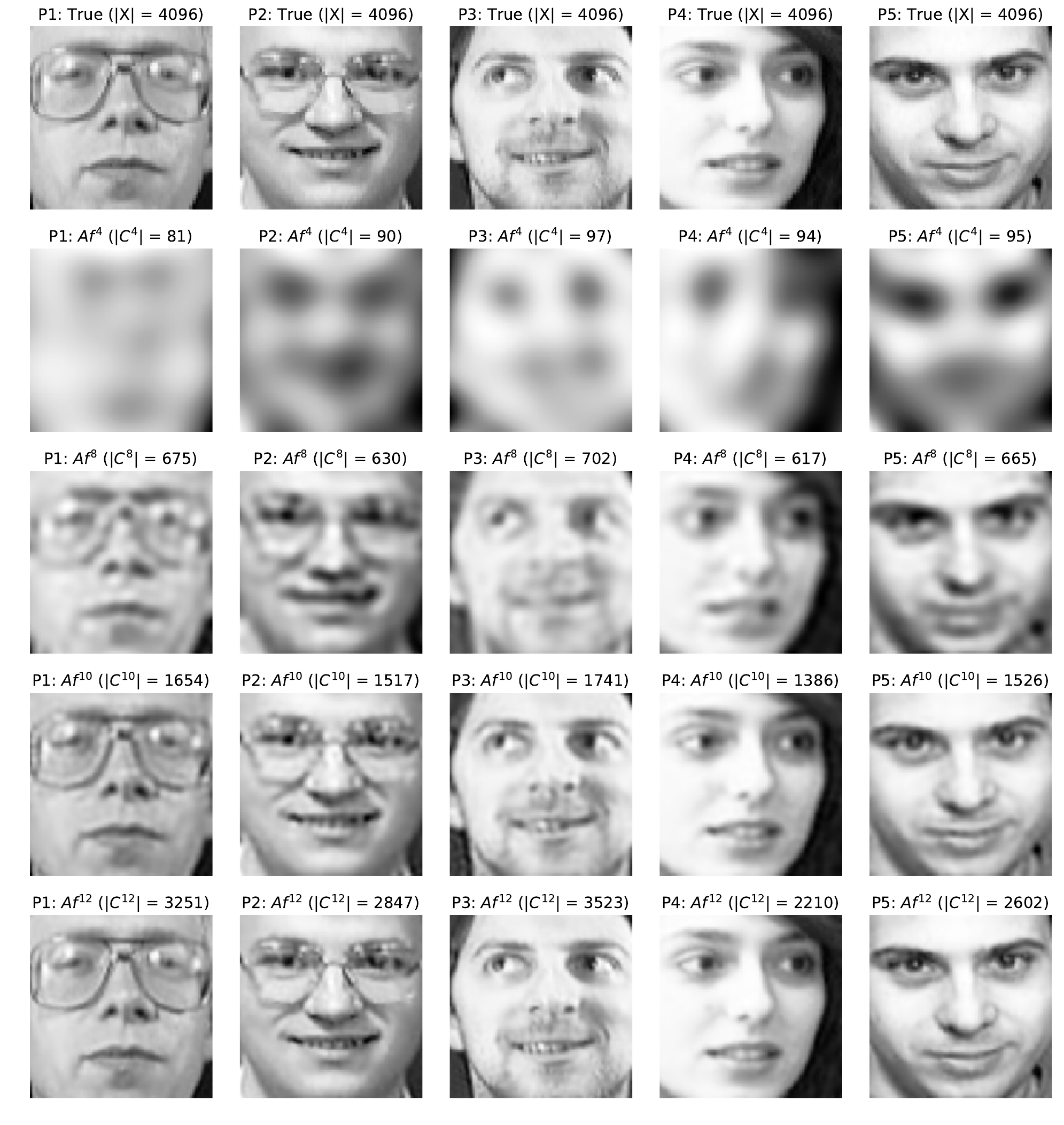}
\caption{\textit{Top row}: 5 grayscale face images ($64 \times 64$) we consider in this section for analysis. \textit{Row 2 to 5}: Each column shows the reconstruction ($100 \times 100$) of the image at scales 4, 8, 10 and 12 respectively for a different person. Besides showing the scale information, the header for each image also shows the corresponding size of sparse representation used to produce the demonstrated  reconstruction.}
\label{Pic8}
\end{figure}

Moving forward, we show a very crucial property of the proposed approach which has to do with stability of the produced approximations with respect to the location of the samples in the training data. For the current study (Fig. \ref{Pic7}), we work with TF1 (univariate functions are easier to visualize). Overall, we randomly select 50 samples from the 200 samples of TF1 without replacement, and carry out this procedure 100 times. This creates a set of 100 training datasets, each having a size of 50. The basic idea is to analyze the produced approximation when samples from different locations are passed as the training data for the same underlying function. In order to keep things in perspective, we have compared the results with Gaussian Processes (sklearn implementation \cite{scikit-learn}) on the same 100 training sets. Fig. \ref{Pic7} shows the mean of the 100 approximations of the function along with 2 standard deviation bounds for quantifying the variation of this mean function and also analyze how far the approximation are from the true function. For Gaussian Processes (GPs), since the dataset doesn't have any noise, we just use the inherent parameter $alpha$ (an added term to the diagonal of the kernel matrix) which should be sufficiently small for not creating artifacts, but also should be large enough to make the computations numerically stable (sklearn documentation \cite{scikit-learn}). Fig. \ref{Pic7} shows results for 3 different alpha values (panel (b), (c) and (d)) and results from our proposed approach (panel (a)). For our approach, the mean of the 100 approximations captures the true functions really well, with slight deviations in the middle region (region with high frequency oscillations). The GP result in panel (b) doesn't learn properly with the data for a relatively large alpha value (result of over-penalization). In panel (c), with a considerable small value of alpha ($10^{-4}$), the mean of the approximation is able to capture the variations on the edges considerably well. However, the middle portion still remains underfit. With an even smaller alpha value as shown in panel (d), now the middle portion is appropriately captured. However, on the edges we have very large fluctuations with predictions completely missing the true function. Hence overall, with GPs (which is also a kernel based approximation), it is really difficult to decide the value of the penalty parameter alpha and a lot of experimentation is required to obtain an acceptable solution. However as seen from panel (a), by using the same setup as all other previous examples, we are able to get a very stable (seen from relatively thinner standard deviation bounds) and accurate (comparing blue and red curves) approximations with our proposed approach. 

From our experiments the multiscale approach is very stable for datasets which are noise free, irrespective of the sampling design. Hence for such cases, if we don't care about the size of the sparse representation and just want to generate an efficient model, we can directly pick a very large scale and not worry about the model overfitting (creating spurious fluctuations and artifacts). For example, for the analysis in Fig. \ref{Pic7}, we have used $\omega = 15$ as our termination scale directly without carrying out cross validation for each of the 100 datasets. The reason for this behavior can be attributed to the extremely local (numerically) support of the functions at higher scales.

\subsection{Application on other datasets}

\begin{figure}[h]
\centering
\includegraphics[width=9.5cm]{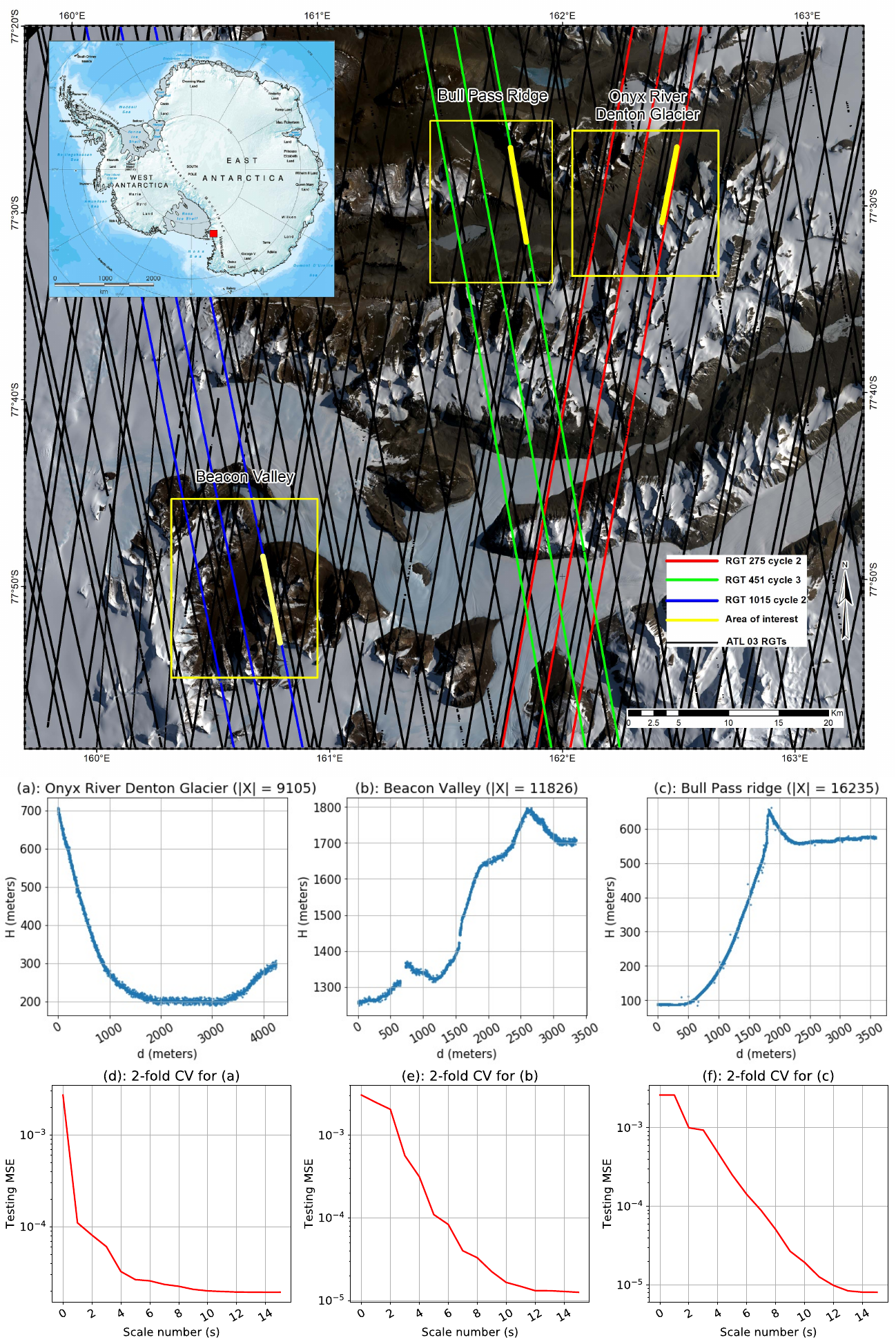}
\caption{\textit{Map}: Location for the section of the 3 ICESat-2 tracks we analyze in this paper. The data was obtained from the National Snow and Ice Data Center (https://nsidc.org/ \cite{nsidcdata}). For map preparation, Landsat-8 2020/02/03, composite bands 4,3,2 image was used from Earth Explorer (https://earthexplorer.usgs.gov/). The bold yellow lines on the 3 tracks in the chosen sites (highlighted by yellow rectangles) show the section of the track where the data is extracted from. \textit{Plots (a) to (c)}: show the distribution of the photons (height vs distance) for these tracks. \textit{Plots (d) to (f)}: show the corresponding 2-CV testing MSE for choosing the truncation scale.}
\label{Pic9}
\end{figure}

In this section we analyze the behavior of our proposed approach on 2 other datasets. We begin by implementing our approach on gray scale face images of 5 different people. The analysis on this dataset is important as image data is fundamentally different than other type of analytical functions or topographic data we analyzed in the previous section. For image data, based on the boundary of features (for example nose, eyes, background etc), capturing the efficient sparse representation for generating good reconstructions (including sharp changes) is a non-trivial task. Fig \ref{Pic8} shows the results for successful application of the proposed approach on this dataset. Here the top row shows the original images ($64 \times 64 = 4096$ pixels). Following this, the subsequent rows show the corresponding reconstruction with different truncation scales (4, 8, 10 and 12 for rows 2,3,4 and 5 respectively). The header for each of the image also shows the corresponding size of the sparse representation that was generated and used in producing these face image approximations. At scale 12, with considerable less number of data points (relative to 4096), the proposed approach was able to produce acceptable reconstructions. It is important to note here that all the reconstructions are done on a $100 \times 100$ resolution (10000 data points) and hence predicting at previously unseen locations. The ability of our approach to capture the information in the data with efficient reconstruction of edges at unseen locations clearly demonstrate the good modeling capabilities of the mutiscale approach.

\begin{figure}[h]
\centering
\includegraphics[width=10.5cm]{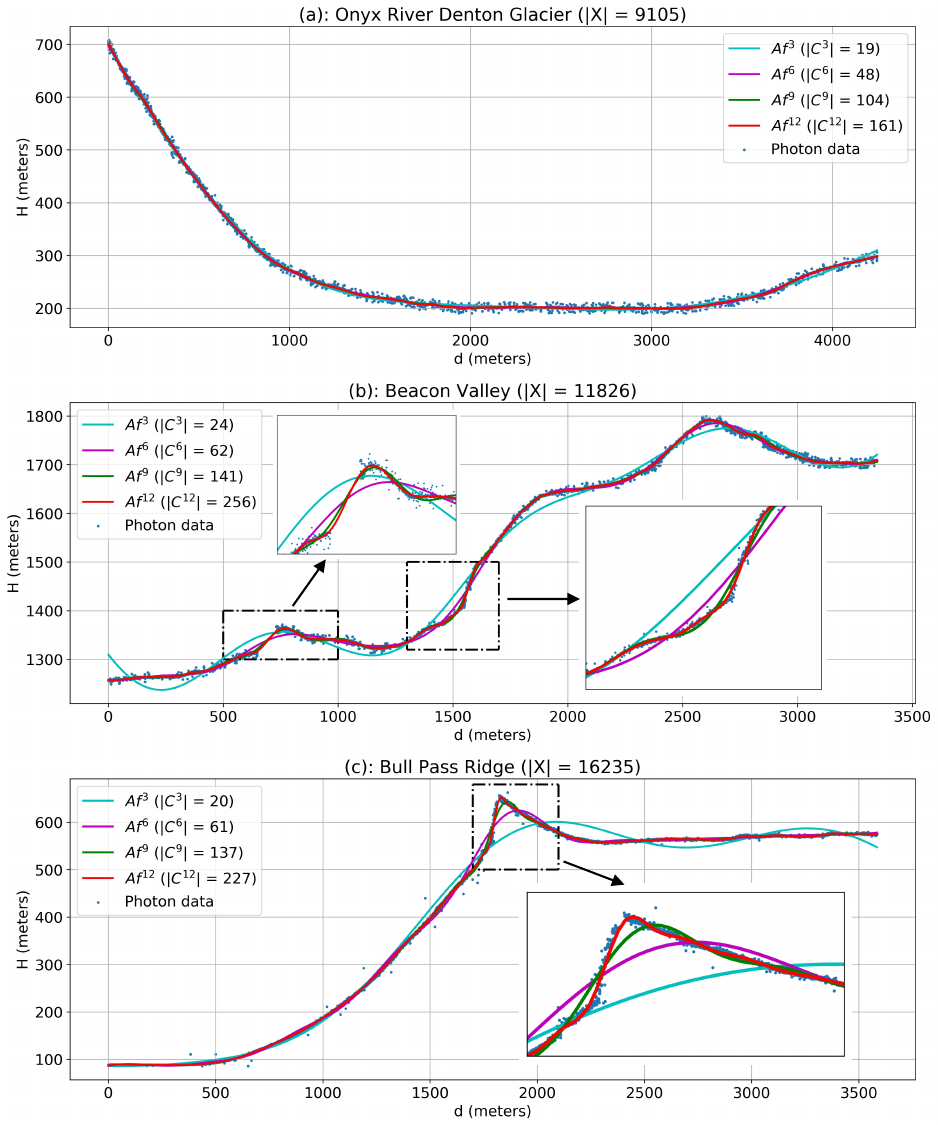}
\caption{Performance of the multiscale approach with reconstruction at scale 3,6,9 and 12 for sites (a), (b) and (c) shown in Fig. \ref{Pic9}. The regions with complication behavior of data are magnified in the insets to clearly demonstrate the quality of the approximation. The legends in each plot also show the size of the sparse representation for each reconstruction to show the quality of data reduction associated with each of the produced approximations.}
\label{Pic10}
\end{figure}

For the final case study, we analyze the performance of our approach on a remote sensing dataset of photon clouds. Specifically, we model the data collected from the still ongoing ICESat-2 mission \cite{markus2017ice}, which has 3 pairs of lasers that map the topography of the earth surface. Depending on the location in the 4-d space (including time) from where the photons are reflected (computed using traveling time of the photon), the challenge is to infer and map the underlying topography. This whole procedure finds applications in analyzing polar ice-sheets, oceans and forest cover \cite{smith2019land,klotz2020high,neuenschwander2019atl08}. Fig. \ref{Pic9} shows three sites which we use in this study from the well known McMurdo Dry Valley region in Antarctica. This site is commonly used for calibration and validation of the satellite instruments due to its stable terrain. The photon cloud for the chosen three sites \cite{10.1002/essoar.10505019.1} are shown in panel (a), (b) and (c) in Fig. \ref{Pic9}. The location of the specific tracks are shown in the map in Fig. \ref{Pic9}. Looking at the sites, (a) with 9105 photons is a relatively smooth region. (b) and (c) on the other hand have higher number of photons with complicated features. Particularly (b) also contains an obvious discontinuity at around 700m. Like the previous experiments, here also we use 2-fold CV and obtain the decaying behavor of the testing MSE shown in Fig. \ref{Pic9} (panels (d), (e) and (f)). Following this, Fig. \ref{Pic10} shows the reconstructions at different scales (3,6,9 and 12) for all the 3 tracks. The figure magnifies complicated regions of the track for the ease of analysis. The most important thing to note here is the highly efficient data reduction due to redundancy in the number of data points available to infer the surface. Particularly, if we assume scale 12 to be our truncation scale based on CV curves in Fig. \ref{Pic9}, with acceptable reconstructions we get a data reduction of 98.2\% (1-161/9105), 97.8\%(1-256/11826) and 98.6\%(1-227/16235) for sites (a), (b) and (c) respectively, as shown in Fig. \ref{Pic10}.

\section{Conclusion and future work}

In this paper we introduced a multiscale approximation space constructed with functions from multiple Reproducing Kernel Hilbert Spaces. For regression in this space, we proposed a greedy approach, that besides giving an acceptable approximation for data also leads to data reduction through computation of efficient sparse representations. In the analysis section we provided results for intelligently deciding the hyperparameters involved in the algorithm. We also further analyzed the approximation properties of finite scale truncation for the overall multiscale kernel. In the results section we analyzed the performance of the algorithm on a wide variety of datasets. For each of these experimental setting, the approach was shown to successfully model the provided dataset while learning efficient sparse representations leading to considerable data reduction (for example in the last remote sensing application, our approach was demonstrated to have a data reduction of almost 2 orders of magnitude).

Regarding future work, one very important direction is to analyze datasets with large noise levels and understand the role of regularization in the multiscale setting. Extension to very large datasets through distributed memory implementations will also make this approach more useful for Big data problems. Finally, in one of the following future works we also plan to analyze and quantify the uncertainty in the final approximation based on the chosen sparse representation at the different considered scales.

\section*{Declarations}

\textbf{Acknowledgments}: We thank Beata Csatho from Department of Geology at University at Buffalo for site selection, data extraction and evaluation of results for the ICESat-2 case study shown in Fig. \ref{Pic9} and \ref{Pic10}. We also thank Ivan Parmuzin from University at Buffalo for data preprocessing and map compilation (Fig. \ref{Pic9}).

\noindent \textbf{Funding}: The work was supported by grant number: OAC2004302

\noindent \textbf{Conflicts of interest/Competing interests}: No conflict of interest.

\noindent \textbf{Availability of data and material}: Not applicable.

\noindent \textbf{Code availability}: \url{https://github.com/pshekhar-tufts/Muliscale_code}

\noindent \textbf{Ethics approval}: : Not applicable.

\noindent \textbf{Consent to participate}: Not applicable.

\noindent \textbf{Consent for publication}: Not applicable.


\bibliographystyle{spmpsci}
\bibliography{ACM}

\end{document}